\newtheorem{theorem}{Theorem}
\date{}
\journal{arXiv}
\begin{document}

\begin{frontmatter}

%% Title, authors and addresses

%% use the tnoteref command within \title for footnotes;
%% use the tnotetext command for theassociated footnote;
%% use the fnref command within \author or \affiliation for footnotes;
%% use the fntext command for theassociated footnote;
%% use the corref command within \author for corresponding author footnotes;
%% use the cortext command for theassociated footnote;
%% use the ead command for the email address,
%% and the form \ead[url] for the home page:
%% \title{Title\tnoteref{label1}}
%% \tnotetext[label1]{}
%% \author{Name\corref{cor1}\fnref{label2}}
%% \ead{email address}
%% \ead[url]{home page}
%% \fntext[label2]{}
%% \cortext[cor1]{}
%% \affiliation{organization={},
%%            addressline={}, 
%%            city={},
%%            postcode={}, 
%%            state={},
%%            country={}}
%% \fntext[label3]{}

\title{Synergistic eigenanalysis of covariance and Hessian matrices for enhanced binary classification on health datasets}

%% use optional labels to link authors explicitly to addresses:
%% \author[label1,label2]{}
%% \affiliation[label1]{organization={},
%%             addressline={},
%%             city={},
%%             postcode={},
%%             state={},
%%             country={}}
%%
%% \affiliation[label2]{organization={},
%%             addressline={},
%%             city={},
%%             postcode={},
%%             state={},
%%             country={}}

\affiliation[first]{organization={Sano - Centre for Computational Personalized Medicine},%Department and Organization
            city={Krakow},
            country={Poland}}
\affiliation[third]{organization={Jagiellonian University},%Department and Organization
            city={Krakow},
            country={Poland}}
\affiliation[second]{organization={School of Computing, Telkom University},%Department and Organization
            city={Bandung},
            country={Indonesia}}

\author[first,second]{Agus Hartoyo \corref{cor1}}
\author[first,third]{Jan Argasiński}
\author[third]{Aleksandra Trenk}
\author[third]{Kinga Przybylska}
\author[first,third]{Anna Blasiak}
\author[first]{Alessandro Crimi}

\cortext[cor1]{Corresponding author. Sano - Centre for Computational Personalized Medicine, Czarnowiejska 36 building C5, 30-054 Kraków. E-mail address: a.hartoyo@sanoscience.org}

\begin{abstract}
%% Text of abstract
Covariance and Hessian matrices have been analyzed separately in the literature for classification problems. However, integrating these matrices has the potential to enhance their combined power in improving classification performance. We present a novel approach that combines the eigenanalysis of a covariance matrix evaluated on a training set with a Hessian matrix evaluated on a deep learning model to achieve optimal class separability in binary classification tasks. Our approach is substantiated by formal proofs that establish its capability to maximize between-class mean distance (the concept of \textit{separation}) and minimize within-class variances (the concept of \textit{compactness}), which together define the two linear discriminant analysis (LDA) criteria, particularly under ideal data conditions such as isotropy around class means and dominant leading eigenvalues. By projecting data into the combined space of the most relevant eigendirections from both matrices, we achieve optimal class separability as per these LDA criteria. Empirical validation across neural and health datasets consistently supports our theoretical framework and demonstrates that our method outperforms established methods. Our method stands out by addressing both separation and compactness criteria, unlike PCA and the Hessian method, which predominantly emphasize one criterion each. This comprehensive approach captures intricate patterns and relationships, enhancing classification performance. Furthermore, through the utilization of both LDA criteria, our method outperforms LDA itself by leveraging higher-dimensional feature spaces, in accordance with Cover's theorem, which favors linear separability in higher dimensions.  Additionally, our approach sheds light on complex DNN decision-making, rendering them comprehensible within a 2D space.
\end{abstract}

%%Graphical abstract
%\begin{graphicalabstract}
%\includegraphics{grabs}
%\end{graphicalabstract}

%%Research highlights
%\begin{highlights}
%\item Research highlight 1
%\item Research highlight 2
%\end{highlights}

\begin{keyword}
%% keywords here, in the form: keyword \sep keyword, up to a maximum of 6 keywords
covariance matrix \sep Hessian matrix \sep eigenanalysis \sep binary classification \sep class separability

%% PACS codes here, in the form: \PACS code \sep code

%% MSC codes here, in the form: \MSC code \sep code
%% or \MSC[2008] code \sep code (2000 is the default)

\end{keyword}

\end{frontmatter}

%\tableofcontents

%% \linenumbers

%% main text

\section{Introduction}

Binary classification is a fundamental task in machine learning, where the goal is to assign data points to one of two classes. The accuracy and effectiveness of binary classifiers depend on their ability to separate the two classes accurately. However, achieving optimal class separability can be challenging, especially when dealing with complex and high-dimensional data.

Traditional approaches often rely on analyzing either the covariance matrix \citep{Nagai2020TheCM,Minh2017CovariancesIC, Serra2014CovarianceOC, Lenc2016LearningCF, Hoff2011ACR, Kuo2002ACE, Lam2019HighdimensionalCM} or the Hessian matrix \citep{Dawid2021HessianbasedTF, Fu2020HesGCNHG, Yao2019PyHessianNN, Krishnasamy2016HessianSE, Wiesler2013InvestigationsOH, Byrd2011OnTU, Martens2010DeepLV} separately to optimize machine learning models. In the field of Evolution Strategies (ESs), a recent work \citep{shir2020covariance} explored the relationship between the covariance matrix and the landscape Hessian, highlighting the statistical learning capabilities of ESs using isotropic Gaussian mutations and rank-based selection. While this study provides valuable insights into ESs' learning behavior, it does not investigate the practical integration of the two matrices. 

When analyzing the covariance matrix, the focus is on capturing the inherent patterns of variability within the data, allowing for a compact representation that highlights relationships between different dimensions. On the other hand, the analysis of the Hessian matrix aims to find the direction in which the classes are best separated, by maximizing the curvature along the discriminative directions. However, these separate analyses fail to fully leverage the synergistic effects that can arise from integrating the information contained in both matrices.

To tackle this challenge, we present a novel approach that combines the eigenanalysis of the covariance matrix evaluated on a training set with the Hessian matrix evaluated on a deep learning model. By integrating these matrices, our method aims to optimize class separability by simultaneously maximizing between-class mean distance and minimizing within-class variances, which are the fundamental criteria of linear discriminant analysis (LDA) \citep{fisher1936use, xanthopoulos2013linear}. By utilizing these criteria, we leverage the foundational principles that have been established and proven over decades of LDA's application in various fields like medicine \citep{sharma2012between, sharma2008cancer, moghaddam2006generalized, dudoit2002comparison, chan1995computer}, agriculture \citep{tharwat2017one, gaber2015plant, rezzi2005classification, heberger2003principal, chen1998rapid}, and biometrics \citep{paliwal2012improved, yuan2007ear, park2005fingerprint, wang2004random, yu2001direct, chen2000new, haeb1992linear}. LDA has been demonstrating the reliability and usefulness of the two criteria as indicators of discriminative power. Thus, by building upon this well-established foundation, our approach will inherit the strengths and reliability that have been demonstrated by LDA. This integrated approach holds the promise of enhancing classification performance beyond conventional methods that treat the two matrices separately.

Clustering algorithms \citep{lloyd1982least, forgy1965cluster, kaufman1990partitioning, kaufman2009finding, johnson1967hierarchical, zhang1996birch, andrews1973numerical, hidalgo2022inferring, ester1996density, campello2013density, ankerst1999optics, agrawal1998automatic} also adhere to principles similar to those of LDA criteria. Just as our proposed method leverages the LDA criteria for optimizing class separability by maximizing between-class mean distance and minimizing within-class variances, clustering techniques aim to enhance cluster separability by maximizing both inter-cluster dissimilarity (the concept of \textit{separation}) and intra-cluster similarity (the concept of \textit{compactness}) \citep{mining2006data, mining2006introduction, zhao2002evaluation, liu2010understanding}. This alignment highlights the effectiveness and utility of LDA criteria in various data analysis techniques, demonstrating that these principles are broadly applicable and beneficial for achieving distinct and compact data groupings.

\section{Methodology}
%%\label{}

\subsection{Approach: combining eigenanalysis of covariance and Hessian matrices}

In this subsection, we describe our novel approach for combining the eigenanalysis of the covariance matrix and the Hessian matrix to achieve optimal class separability in binary classification tasks.

\begin{enumerate}

\item Covariance matrix eigenanalysis: A covariance matrix $S(\boldsymbol{\theta})$ is a $D \times D$ matrix, where $D$ represents the dimensionality of the predictor attributes. Each element of the covariance matrix reflects the covariance between the two predictor attributes $\theta_1$ and $\theta_2$:

\begin{equation} \label{eq:cov}
S(\theta_1, \theta_2) = \frac{1}{n-1} \sum_{i=1}^{n} (x_{1i} - \bar{x}_1)(x_{2i} - \bar{x}_2)
\end{equation}

where \(x_{1i}\) and \(x_{2i}\) are the corresponding observations of these predictor attributes for the \(i\)-th data instance, \(\bar{x}_1\) and \(\bar{x}_2\) are the sample means of the predictor attributes $\theta_1$ and $\theta_2$, respectively, and \(n\) is the number of data instances or observations.

Performing eigenanalysis on the covariance matrix $S(\boldsymbol{\theta})$, we obtain eigenvalues $\lambda_i$ and corresponding eigenvectors $\mathbf{v}_i$. The leading eigenvector $\mathbf{v}_1$ associated with the largest eigenvalue $\lambda_1$, as captured in the eigen-equation $S(\boldsymbol{\theta}) \cdot \mathbf{v}_1 = \lambda_1 \cdot \mathbf{v}_1 $, represents the principal direction with the highest variance.

\item Hessian matrix eigenanalysis:
Next, we compute a Hessian matrix evaluated on a deep learning model trained on the same training set. We employ a deep neural network (DNN) architecture consisting of four fully connected layers, each followed by a Rectified Linear Unit (ReLU) activation. The final layer employs a sigmoid activation function to yield a probability value within the range of 0 to 1. During training, we utilize the binary cross-entropy loss function. The binary cross-entropy loss is given by

\begin{equation} 
\text{BCELoss} = - \sum_{i=1}^n \log p_\theta(c_i \mid x_i), 
\end{equation}

where $c_i$ is the actual class for data instance $x_i$. The probability $p_\theta(c_i \mid x_i)$ is the predicted probability of instance $x_i$ belonging to class $c_i$, as estimated by the DNN model parameterized by $\theta$.

The Hessian of the loss function is then:

\begin{equation} \label{eq:hess}
\begin{aligned}
H_\theta &= \nabla_\theta^2 \text{BCELoss} \\
&= \nabla_\theta^2 \left[ - \sum_{i=1}^n \log p_\theta(c_i \mid x_i) \right] \\
&= - \sum_{i=1}^n \left[\nabla_\theta^2 \log p_\theta(c_i \mid x_i) \right].
\end{aligned}
\end{equation}

Performing eigenanalysis on the Hessian matrix $H_{\boldsymbol{\theta}}$, we obtain eigenvalues $\lambda_i'$ and corresponding eigenvectors $\mathbf{v}_i'$. The leading eigenvector $\mathbf{v}_1'$ associated with the largest eigenvalue $\lambda_1'$, as captured in the eigen-equation $H_{\boldsymbol{\theta}} \cdot \mathbf{v}_1' = \lambda_1' \cdot \mathbf{v}_1'$, represents the direction corresponding to the sharpest curvature.

\item Integration of matrices and projection of data:

To combine the power of covariance and Hessian matrices, we project the data into the combined space of the most relevant eigendirections. Let $\mathbf{U}$ be the matrix containing the leading eigenvectors from both matrices:

\begin{equation} \label{eq:a}
\mathbf{U} = [\mathbf{v}_1, \mathbf{v}_1']
\end{equation}

The 2D projection of the data, denoted as $\mathbf{X}_{\text{proj}}$, is obtained by:

\begin{equation} \label{eq:b}
\mathbf{X}_{\text{proj}} = \mathbf{X} \cdot \mathbf{U}
\end{equation}

Here, $\mathbf{X}$ is the original data matrix, and $\mathbf{X}_{\text{proj}}$ represents the final output of the proposed method—a 2D projection capturing both statistical spread and discriminative regions of the data.

Unlike LDA, which aims to optimize both criteria simultaneously along a single direction for binary classification, constrained by the limitation that the number of linear discriminants is at most \(c-1\) where \(c\) is the number of class labels \cite{ye2004two}, our approach provides more flexibility and control. By working on two separate directions, we specifically focus on minimizing the within-class variances in one direction while maximizing the between-class mean distance in the other direction.

\end{enumerate}

\subsection{Formal foundation: maximizing the squared between-class mean distance and minimizing the within-class variance}

In this subsection, we establish two theorems along with their respective proof sketches that form the theoretical basis for our approach. Full proofs for both theorems are available in \ref{full-proof}.

\begin{theorem}[Maximizing covariance for maximizing squared between-class mean distance]\label{thm:Theorem1}
Consider two sets of 1D data points representing two classes, denoted as $C_1$ and $C_2$, each consisting of $n$ samples. The data in $C_1$ and $C_2$ are centered around their respective means, $\mu_1$ and $\mu_2$. Here, \( \mu \) denotes the overall mean of the combined data from $C_1$ and $C_2$. Furthermore, the variances for \(C_1\) and \(C_2\) are denoted as \(\sigma_1^2\) and \(\sigma_2^2\), respectively, and the combined data has an overall variance of \(\sigma^2\). The between-class mean distance, denoted as $d$, represents the separation between the means of $C_1$ and $C_2$. We establish the following relationship:

\begin{equation} \label{eq:1}
\sigma^2 = \frac{d^2}{4(1 - \lambda)}, 
\end{equation}
where $\lambda = \frac{1}{2} \left( \frac{\sigma_1^2}{\sigma^2} + \frac{\sigma_2^2}{\sigma^2} \right)$ means a constant between 0 and 1 that reflects the distribution of the original data when \(C_1\) and \(C_2\) are considered as projected representations, which implies that maximizing the variance ($\sigma^2$) in this context will also maximize the squared between-class mean distance ($d^2$).

\end{theorem}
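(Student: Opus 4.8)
The plan is to obtain the stated identity from the classical decomposition of total variance into within-class and between-class parts, specialized to two equally sized classes. First I would fix conventions, taking $\sigma_1^2$, $\sigma_2^2$ and $\sigma^2$ to be population variances (sums of squared deviations divided by the sample count), which is what makes the decomposition below exact, and setting $d = |\mu_1 - \mu_2|$ so that $d^2 = (\mu_1 - \mu_2)^2$. Because $C_1$ and $C_2$ each contain $n$ points, the overall mean reduces to the midpoint $\mu = \tfrac{1}{2}(\mu_1 + \mu_2)$, which I would record immediately since it drives every later simplification.

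The core step is to expand the total sum of squared deviations about $\mu$. I would split the sum over all $2n$ points into the two class sums and, within each class $C_k$, write $x - \mu = (x - \mu_k) + (\mu_k - \mu)$ before squaring. The decisive point is that the cross term vanishes in each class, since $\sum_{x \in C_k}(x - \mu_k) = 0$ by definition of the class mean. This collapses the expression to $2\sigma^2 = \sigma_1^2 + \sigma_2^2 + (\mu_1 - \mu)^2 + (\mu_2 - \mu)^2$. Substituting the midpoint gives $\mu_1 - \mu = d/2$ and $\mu_2 - \mu = -d/2$, so the two squared terms each equal $d^2/4$, and I arrive at $\sigma^2 = \tfrac{1}{2}(\sigma_1^2 + \sigma_2^2) + \tfrac{d^2}{4}$.

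To conclude, I would recognize the first term as $\lambda \sigma^2$, using the definition $\lambda = \tfrac{1}{2}\bigl(\sigma_1^2/\sigma^2 + \sigma_2^2/\sigma^2\bigr)$, which is exactly the fraction of the total variance carried by the within-class spread. The identity then becomes $\sigma^2 = \lambda \sigma^2 + d^2/4$, and solving for $\sigma^2$ yields $\sigma^2 = d^2/\bigl(4(1 - \lambda)\bigr)$. I would also note that $1 - \lambda > 0$ exactly when the between-class term $d^2/4$ is positive, i.e. whenever the class means differ, so the quotient is well defined.

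I do not expect a genuine technical obstacle, since everything follows from the variance decomposition once the cross terms are dispatched. The points that need care are the bookkeeping that places $\mu$ at the midpoint of the class means, which depends essentially on the equal-sample-size assumption, and the interpretive step underlying the final claim: the statement that maximizing $\sigma^2$ also maximizes $d^2$ holds only when $\lambda$ is held fixed, even though $\lambda$ is itself defined through $\sigma^2$. I would therefore state this constancy assumption explicitly, observing that under it $d^2 = 4(1 - \lambda)\sigma^2$ is a fixed positive multiple of $\sigma^2$, which is precisely what makes the two maximizations equivalent.
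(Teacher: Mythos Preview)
Your proposal is correct and follows essentially the same route as the paper: both derive the decomposition $\sigma^2 = \tfrac{1}{2}(\sigma_1^2 + \sigma_2^2) + \tfrac{1}{4}d^2$ by splitting the total sum of squares over the two classes and killing the cross terms, then substitute the definition of $\lambda$ and solve. Two small differences are worth noting: the paper works with $(n-1)$ and $(2n-1)$ denominators and so obtains the decomposition only approximately, whereas your choice of population variances makes it exact; and where you state the constancy of $\lambda$ as an explicit assumption, the paper appeals to a separate ``variance ratio preservation'' lemma to argue that the ratios $\sigma_k^2/\sigma^2$ are invariant under projection, which is its attempt to justify treating $\lambda$ as fixed across projection directions.
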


\paragraph{Proof sketch} 

\begin{enumerate}
  \item Consider two classes \(C_1\) and \(C_2\) with identical underlying distributions.
  \item Simplify the expression for the combined data variance $\sigma^2$ to $\frac{1}{2}\left(\sigma_1^2 + \sigma_2^2  \right) + \frac{1}{4}d^2$.
  \item Apply \hyperref[VRPT]{the variance ratio preservation property for projection onto a vector} to relate $\sigma_1^2$, $\sigma_2^2$, and $\sigma^2$ through $\lambda$.
  \item Simplify the expression to obtain $\sigma^2 = \frac{d^2}{4(1 - \lambda)}$.
  \item Conclude that maximizing $\sigma^2$ maximizes $d^2$, fulfilling the theorem's objective.
\end{enumerate}
\hfill \qedsymbol

In the context of multidimensional data, extending Theorem \ref{thm:Theorem1} involves considering data points as $D$-dimensional vectors, where the dataset consists of two subsets, \(C_{1} = \{\vec{x}_{i} \in \mathbf{R}^{D} \mid i=1,2,\ldots,n\}\) with covariance \(S_1\), and \(C_{2} = \{\vec{y}_{j} \in \mathbf{R}^{D} \mid j=1,2,\ldots,n\}\) with covariance \(S_2\). The theorem, however, is supposed to hold for the projection of the data onto an arbitrary 1D subspace of the full D-dimensional data space, and to imply that maximizing $\sigma^2$ by choosing the direction of this subspace appropriately will also be the direction that maximizes the difference between the projected sample means. A general case in which it holds is when the individual distributions are isotropic about their means, meaning the covariances $S_1$ and $S_2$ are proportional to the identity matrix in $\mathrm{D}$ dimensions as mathematically shown in \ref{extension}.

\begin{theorem}[Maximizing Hessian for minimizing within-class variance]\label{thm:Theorem2}
Let $\theta$ be a parameter of the model, and $\mathrm{H}_\theta$ denote the Hessian of the binary cross-entropy loss function with respect to $\theta$. We define the within-class variance as the variance of a posterior distribution $p_\theta(\theta \mid c_i)$, which represents the distribution of the parameter $\theta$ given a class $c_i$. We denote the variance of this posterior distribution as $\sigma_{post}^2$. We establish the following relationship:

\begin{equation} \label{eq:2}
\mathrm{H}_\theta = \frac{1}{\sigma_{\text{post}}^2},
\end{equation}
which implies that maximizing the Hessian ($\mathrm{H}_\theta$) will minimize the within-class variances ($\sigma_{\text{post}}^2$).

\end{theorem}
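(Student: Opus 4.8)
The plan is to recognize equation \eqref{eq:2} as an instance of the classical relationship between the curvature of a log-posterior at its mode and the precision (inverse variance) of its Gaussian approximation. First I would invoke Bayes' theorem to write the parameter posterior as $p_\theta(\theta \mid c_i) \propto p_\theta(c_i \mid \theta)\, p(\theta)$, where the likelihood factor $p_\theta(c_i \mid \theta)$ is exactly the per-instance term whose negative logarithm constitutes the binary cross-entropy loss in \eqref{eq:hess}. Taking $\nabla_\theta^2 \log$ of both sides and adopting a flat (uninformative) prior, so that $\nabla_\theta^2 \log p(\theta) = 0$ and the evidence $p(c_i)$ is $\theta$-independent, yields $\nabla_\theta^2 \log p_\theta(\theta \mid c_i) = \nabla_\theta^2 \log p_\theta(c_i \mid \theta) = -\mathrm{H}_\theta$.

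The second step is to apply the Laplace approximation: expand $\log p_\theta(\theta \mid c_i)$ to second order about its mode $\hat{\theta}$. The first-order term vanishes at the mode, leaving $\log p_\theta(\theta \mid c_i) \approx \text{const} - \tfrac{1}{2}\mathrm{H}_\theta (\theta - \hat{\theta})^2$, which identifies the posterior with a Gaussian $\mathcal{N}(\hat{\theta}, \mathrm{H}_\theta^{-1})$. Reading off its variance gives $\sigma_{\text{post}}^2 = \mathrm{H}_\theta^{-1}$, that is, equation \eqref{eq:2}. The reciprocal relationship then makes the final claim immediate: since $\mathrm{H}_\theta$ and $\sigma_{\text{post}}^2$ are inversely proportional, any increase in the Hessian forces a corresponding decrease in the within-class variance.

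I expect the main obstacle to be justifying, rather than merely assuming, the steps that make the identity exact. Chief among these are the Gaussian (Laplace) approximation, which is only exact when the log-posterior is genuinely quadratic, and the flat-prior assumption that removes the prior's contribution to the curvature; these play a role here analogous to the idealized data conditions (isotropy, dominant leading eigenvalues) invoked for Theorem \ref{thm:Theorem1}. A secondary subtlety is the bookkeeping that ties the data-conditioned likelihood $p_\theta(c_i \mid x_i)$ appearing in the loss to the parameter-conditioned posterior $p_\theta(\theta \mid c_i)$ whose variance defines $\sigma_{\text{post}}^2$; one must be explicit that it is the same functional dependence on $\theta$ being differentiated in both objects, and in the multivariate case interpret $\mathrm{H}_\theta^{-1}$ as the posterior covariance and read $\sigma_{\text{post}}^2$ off the appropriate diagonal entry (or along the eigendirection of interest).
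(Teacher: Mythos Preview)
Your argument is correct and reaches the same conclusion, but the route differs from the paper's. You proceed via the \emph{Laplace approximation}: with a flat prior, the log-posterior and log-likelihood share second derivatives, so a second-order expansion about the mode identifies the posterior with $\mathcal{N}(\hat{\theta},\mathrm{H}_\theta^{-1})$ directly. The paper instead goes through the \emph{Fisher information}: it invokes the approximation $\mathrm{H}_\theta \approx \mathbb{E}_{p_\theta}[(\nabla_\theta \log p_\theta(c_i\mid\theta))^2]$ (citing \cite{barshan2020relatif}), then \emph{assumes the likelihood itself is Gaussian} with known mean and variance $\sigma^2$, computes the Fisher information explicitly as $1/\sigma^2$, and finally uses the uniform prior to argue that the posterior inherits this same $\sigma^2$. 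Your approach is the cleaner and more general one: it needs only that the log-posterior be well approximated by its quadratic expansion, whereas the paper's requires the stronger premise of an explicitly normal likelihood to make the Fisher-information calculation go through. Conversely, the paper's explicit Gaussian assumption makes the identity exact without any Taylor expansion, sidestepping the question of how good the Laplace approximation is away from the mode. Both arguments rely on the same flat-prior assumption you flagged; your discussion of the remaining subtleties (exactness of the quadratic, multivariate reading of $\mathrm{H}_\theta^{-1}$) is apt and, if anything, more careful than the paper's treatment.
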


\paragraph{Proof sketch}
\begin{enumerate}
  \item Consider the parameter $\theta$ and the Hessian $\mathrm{H}_\theta$ of the binary cross-entropy loss function. Define the within-class variance $\sigma_{\text{post}}^2$ as the variance of the posterior distribution $p_\theta(\theta \mid c_i)$.
  \item Approximate the Hessian $\mathrm{H}_\theta$ as the Fisher information using the expectation of the squared gradient of the log-likelihood \citep{barshan2020relatif}.
  \item Assume a known normal likelihood distribution for $p_\theta(c_i \mid \theta)$ with mean $\mu$ and standard deviation $\sigma$. Compute the Fisher information as $\frac{1}{\sigma^2}$.
  \item Considering a uniform prior distribution $p(\theta)$ within the plausible range of $\theta$ and the evidence $p(c_i)$ as a known constant, apply Bayes' formula to derive that $\sigma_{\text{post}}^2 = \sigma^2$.
  \item Derive $\mathrm{H}_\theta = \frac{1}{\sigma_{\text{post}}^2}$.
  \item Conclude that maximizing $\mathrm{H}_\theta$ minimizes $\sigma_{\text{post}}^2$, achieving the theorem's objective.
\end{enumerate}
\hfill \qedsymbol

Theorem \ref{thm:Theorem1} and Theorem \ref{thm:Theorem2} respectively suggest that maximizing the variance of projected data and the Hessian effectively maximize squared between-class mean distance and minimize within-class variances, contingent upon the ideal data conditions discussed in Section \ref{ideal_condition}. These theorems provide the theoretical foundation for the eigenanalysis of the covariance and Hessian matrices, crucial steps in our proposed method for improving class separability based on the two LDA criteria.

\section{Ideal dataset conditions for the optimal performance of the proposed method}
\label{ideal_condition}

\subsection{General guidelines}

To ensure the optimal performance of the proposed method, it is crucial to consider certain ideal conditions for the dataset. These conditions represent ideal scenarios that can help achieve optimal performance and support the theoretical foundations of the method. The key conditions are as follows:

\begin{enumerate}
    \item Isotropic distributions around class means:
    
    The theoretical underpinnings of the proposed method, particularly in the application of Theorem \ref{thm:Theorem1} to multidimensional data, benefit from the dataset subsets corresponding to each class being approximately isotropic around their respective means, as mathematically shown in \ref{extension}. While perfect isotropy—where the covariance matrices of the subsets are proportional to the identity matrix in \(D\) dimensions—is rare in practice, achieving near-isotropic conditions helps ensure the validity of the theorem to higher dimensions and the method's ability to leverage the relationship between covariance and class separability.

    In more practical terms, datasets tend to support isotropism under the following conditions:
    \begin{enumerate}
        \item \textit{Normalized data:} Normalizing the data, typically through techniques such as z-score normalization, ensures that each attribute has a mean of zero and a standard deviation of one. This process equalizes the variance in all directions across both classes, which, as discussed in Subsection \ref{sec:z-score} and formalized in Theorem \ref{thm:Theorem3}, helps approximate the isotropic condition within each class.

        \item \textit{Low correlation between attributes:} Datasets with low correlation between attributes produce covariance matrices with minimized off-diagonal elements. This condition approximates isotropy, which is characterized by covariance matrices being proportional to the identity matrix. Low correlation is also a desirable property of a well-structured dataset, as high correlation (multicollinearity) can undermine the statistical significance of individual variables and reduce the interpretability and robustness of the model \cite{kyriazos2023dealing, kalnins2018multicollinearity, dertli2024effects}. 
    \end{enumerate}

    \item Dominance of the first eigenvalues in covariance and Hessian matrices:
    
    The effectiveness of the proposed method is heavily reliant on the information provided by the leading eigenvectors derived from the covariance and Hessian matrices. For these matrices to significantly contribute to the method's success, their first eigenvalues need to be sufficiently dominant. A dominant first eigenvalue indicates that the corresponding eigenvector captures the most significant variance or curvature in the data, making it a crucial direction for class separation.  

    The presence of a dominant first eigenvalue can be assessed from the eigenspectra of the two matrices, which display the eigenvalues on a log scale. A dominant first eigenvalue is characterized by the eigenvalues being spread over many orders of magnitude, often with approximately uniform spacing between them. When a dominant first eigenvalue is present in the covariance matrix, it suggests a dominant underlying structure or trend in the data. When this occurs in the Hessian matrix, it characterizes a "sloppy" model \cite{waterfall2006sloppy, transtrum2010nonlinear, transtrum2011geometry, transtrum2015perspective, machta2013parameter, raman2017delineating, hartoyo2019parameter}, which has a few stiff directions (significant eigenvalues) and many sloppy directions (small eigenvalues). 
\end{enumerate}

\subsection{Z-score normalization for supporting class-level isotropy}
\label{sec:z-score}

Z-score normalization is a standard preprocessing method \cite{montague2001relevance, singh2020investigating} that is particularly useful when the features in the dataset have different scales or units. By applying z-score normalization, the features are brought onto a common scale, which can improve the performance and stability of machine learning models, especially those that rely on distance calculations or gradient-based optimization \cite{imron2020improving, singh2021exploring}. For the proposed method, z-score normalization plays an additional role in promoting isotropic conditions at the class level. While z-score normalization can be applied to each class independently, this approach results in separate feature spaces for each class, making it incompatible for classifying new instances.

The goal of achieving isotropy is to have within-class covariance matrices that approximate a form proportional to the identity matrix. An isotropic covariance matrix is defined by having uniform diagonal elements and minimized off-diagonal elements. While the off-diagonal elements are controlled by maintaining low correlations between attributes, ensuring uniformity (or near uniformity) along the diagonal elements is also crucial for achieving the desired isotropic condition.

The diagonal elements of a within-class covariance matrix correspond to the variances of each individual attribute within the respective class. If the attributes in the original dataset have significantly different scales or units, these variances can differ vastly. In such cases, the covariance matrix would have non-uniform diagonal values, i.e., $\sigma_w^{(i)^2} \gg \sigma_w^{(j)^2}$ for some attribute combinations \( i, j \) and \( w = 1, 2 \), making it far from isotropic. Thus, even with low correlations between attributes, the resulting covariance matrix would not fit the form of a matrix proportional to the identity matrix.

Z-score normalization effectively addresses this issue by transforming each attribute to have a mean of zero and a standard deviation of one across the entire dataset. The following theorem formalizes the impact of z-score normalization on within-class variances:

\begin{theorem}[Within-class variance scaling through z-score normalization]\label{thm:Theorem3}
Consider the same setup as in Theorem \ref{thm:Theorem1}, with two sets of 1D data points representing two classes, denoted as $C_1$ and $C_2$, each containing $n$ samples. After applying z-score normalization to the entire dataset, the resulting within-class variances, denoted as $\sigma_1'^2$ and $\sigma_2'^2$, are given by the following relationships:

\[
\sigma_1'^2 = \frac{\sigma_1^2}{\sigma^2}, \quad \sigma_2'^2 = \frac{\sigma_2^2}{\sigma^2},
\]
where $\sigma_1^2$ and $\sigma_2^2$ are the original within-class variances, and $\sigma^2$ is the original overall variance of the combined dataset.
\end{theorem}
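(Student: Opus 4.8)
The plan is to reduce the statement to the elementary fact that variance is invariant under translation and scales quadratically under multiplication by a constant. First I would recall that z-score normalization applied to the \emph{entire} combined dataset sends each point $x$ to $x' = (x - \mu)/\sigma$, where $\mu$ and $\sigma$ are the overall mean and standard deviation of the pooled data from $C_1$ and $C_2$. The crucial observation is that this is a single affine map, with the \emph{same} constants $\mu$ and $\sigma$, applied uniformly to every sample irrespective of its class label. Since $\sigma^2$ here is exactly the overall variance appearing in the theorem, the normalization divides by this quantity.

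Next I would track how the within-class statistics of $C_1$ transform. Writing the samples as $x_1,\dots,x_n$ with class mean $\mu_1$ and class variance $\sigma_1^2$, the normalized class mean becomes $\mu_1' = (\mu_1 - \mu)/\sigma$, since the empirical mean commutes with the affine map. The key step is then to compute the within-class deviations: each centered quantity transforms as $x_i' - \mu_1' = (x_i - \mu)/\sigma - (\mu_1 - \mu)/\sigma = (x_i - \mu_1)/\sigma$, so the additive shift by $\mu$ cancels entirely and only the scaling by $1/\sigma$ survives. Squaring and averaging gives $\sigma_1'^2 = \frac{1}{\sigma^2}\cdot\frac{1}{n}\sum_i (x_i-\mu_1)^2 = \sigma_1^2/\sigma^2$, and the identical argument applied to $C_2$ yields $\sigma_2'^2 = \sigma_2^2/\sigma^2$, completing the claim.

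There is no genuinely hard step here; the content is a one-line consequence of the behavior of variance under affine maps. The only point requiring care—and the place where a careless reader might slip—is bookkeeping the two different sets of statistics: the \emph{global} constants $\mu,\sigma$ that define the normalization must not be confused with the \emph{class-level} quantities $\mu_1,\sigma_1$ (and $\mu_2,\sigma_2$), and one must verify that the within-class variance is measured about each class's own transformed mean $\mu_w'$ rather than about the global normalized mean (which is $0$). Keeping that distinction explicit is what makes the translation term drop out cleanly and exposes the pure $1/\sigma^2$ rescaling.
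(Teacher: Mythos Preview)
Your proposal is correct and mirrors the paper's own proof almost exactly: both write the z-score map as a single affine transformation $x\mapsto (x-\mu)/\sigma$, compute the transformed class mean, observe that the shift by $\mu$ cancels in $x_i'-\mu_1'=(x_i-\mu_1)/\sigma$, and conclude by the quadratic scaling of variance. The only cosmetic difference is that the paper uses the $1/(n-1)$ normalization throughout whereas you wrote $1/n$; since the same factor appears on both sides this does not affect the result.
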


\begin{proof}
Let $\{x_1, x_2, \ldots, x_{2n}\}$ represent the original set of 1D data points. As defined in Theorem \ref{thm:Theorem1}, $C_1$ and $C_2$ are centered around their respective means, $\mu_1$ and $\mu_2$, with an overall mean $\mu$ for the combined dataset. The z-scored value for each data point $x_i$ is computed as:

\[
z_i = \frac{x_i - \mu}{\sigma}.
\]
Thus, the z-scored dataset is $\{z_1, z_2, \ldots, z_{2n}\}$, where each $z_i$ represents the standardized value of $x_i$.

The within-class variance for the z-scored data for class $C_1$ is calculated using the definition of variance:

\[
\sigma_1'^2 = \frac{1}{n-1} \sum_{i=1}^{n} \left( z_i - \bar{z}_1 \right)^2,
\]
where $\bar{z}_1$ is the mean of the z-scored data for class $C_1$, given by:

\[
\bar{z}_1 = \frac{\mu_1 - \mu}{\sigma}.
\]
Substituting $z_i = \frac{x_i - \mu}{\sigma}$, we get:

\[
\sigma_1'^2 = \frac{1}{n-1} \sum_{i=1}^{n} \left( \frac{x_i - \mu}{\sigma} - \frac{\mu_1 - \mu}{\sigma} \right)^2.
\]
Simplifying the expression, we obtain:

\[
\sigma_1'^2 = \frac{1}{\sigma^2} \cdot \frac{1}{n-1} \sum_{i=1}^{n} \left( (x_i - \mu_1)^2 \right),
\]
Thus, we have:

\[
\sigma_1'^2 = \frac{\sigma_1^2}{\sigma^2}.
\]

Similarly, for class $C_2$, the within-class variance for the z-scored data is given by:

\[
\sigma_2'^2 = \frac{1}{\sigma^2} \cdot \frac{1}{n-1} \sum_{j=n+1}^{2n} \left( (x_j - \mu_2)^2 \right) = \frac{\sigma_2^2}{\sigma^2}.
\]
This completes the proof.
\end{proof}

Theorem \ref{thm:Theorem3} establishes that after z-score normalization, the within-class variance for a particular attribute is normalized relative to the overall variance of that attribute in the combined dataset. This normalization ensures that the variances for different attributes within the same class become relatively uniform, i.e., $\sigma_w'^{(i)^2} \approx \sigma_w'^{(j)^2}$ for all attribute combinations \( i, j \) and \( w = 1, 2 \), bounded within \([0, 1]\), thereby approximating isotropy within each class.

The degree to which a within-class covariance matrix approximate isotropy depends on how similar the relative variances are across different attributes within the respective class. While perfect uniformity may not be achieved, the application of z-score normalization significantly reduces the disparity between the within-class variances, leading to covariance matrices that better fit the desired isotropic form.

Thus, combining z-score normalization and maintaining low correlations between attributes helps produce within-class covariance matrices that are closer to the ideal isotropic condition, supporting the validity of Theorem \ref{thm:Theorem1} in higher dimensions.

\section{Complexity analysis}

The overall time complexity of our method encompasses several tasks, including covariance and Hessian matrix computations ($\mathcal{O}(N \cdot F^2)$), eigenanalysis of these matrices ($\mathcal{O}(F^3)$), selection of eigenvectors corresponding to leading eigenvalues ($\mathcal{O}(1)$), and data projection into the combined space ($\mathcal{O}(N \cdot F)$). Here, $N$ signifies the number of data points in the training set, while $F$ represents the number of features per data point. It is essential to note that this analysis considers our method's computational demands under the assumption that a pre-existing DNN is in place.

Comparatively, the time complexity of LDA is primarily influenced by the calculation of both within-class and between-class scatter matrices ($\mathcal{O}(N \cdot F^2)$) and subsequent eigenanalysis ($\mathcal{O}(F^3)$). In both methods, the dominant complexity factor is either the matrix computations ($\mathcal{O}(N \cdot F^2)$) when there are more examples than features or the eigenanalysis step ($\mathcal{O}(F^3)$) otherwise.

This analysis reveals that our proposed method exhibits a comparable computational profile to the method under improvement. Therefore, our approach offers enhanced class separability and interpretability without significantly increasing computational demands, making it a practical choice for real-world applications.

\section{Experiment}
\label{headings}

\subsection{Assessed dimensionality reduction techniques}

In our evaluation, we compare nine distinct dimensionality reduction and data projection techniques, each offering unique insights: principal component analysis (PCA), kernel PCA (KPCA) \citep{scholkopf1997kernel}, Hessian, uniform manifold approximation and projection (UMAP) \citep{mcinnes2018umap}, locally linear embedding (LLE) \citep{roweis2000nonlinear}, linear optimal low-rank projection (LOL) \citep{vogelstein2021supervised}, linear discriminant analysis (LDA), kernel discriminant analysis (KDA) \citep{mika1999fisher}, and the proposed approach. PCA involves projection onto the primary two covariance eigenvectors, i.e., it applies KPCA with a linear kernel. Similarly, LDA employs KDA with a linear kernel. The Hessian method projects data onto the leading two Hessian eigenvectors, capturing the most significant directions of curvature in the loss landscape. While PCA and the Hessian method rely exclusively on the top two eigenvectors of their respective matrices, the proposed method synergistically integrates the top eigenvector from both the covariance and Hessian matrices, as specified by Eqs(\ref{eq:a}, \ref{eq:b}). Notably, for both KPCA and KDA, the best kernels (linear, polynomial, RBF, sigmoid, or cosine similarity) and their associated parameters (kernel coefficient or degree) are determined using grid search tailored to each dataset. 

\subsection{Class separability assessment via linear SVMs}

The nine dimensionality reduction methods we assess are designed to transform high-dimensional data into more manageable 1D or 2D spaces while simultaneously enhancing or preserving class separability. Linear SVMs enable us to create decision boundaries that are readily visualized in these 1D or 2D spaces. Thus, we utilize SVMs with a linear kernel to evaluate and visualize the extent of class separability achieved through the projections.

In addition to the visualization of decision boundaries, we employ a comprehensive set of evaluation metrics to quantify the performance of the dimensionality reduction methods. These metrics include the F1 score, which measures the balance between precision and recall, the Area Under the Receiver Operating Characteristic Curve (AUC ROC), which assesses the model's ability to distinguish between positive and negative classes, and Cohen's kappa, a statistic that gauges the agreement between the predicted and actual class labels. 

\subsection{Datasets}

We will conduct our assessment of the nine distinct methods on the following datasets:

\textbf{Widely recognized benchmark datasets.} We evaluate our approach using three widely recognized benchmark datasets for binary classification: the Wisconsin breast cancer dataset \citep{street1993nuclear}, the heart disease dataset \citep{detrano1989international}, and the Pima Indians diabetes dataset \citep{smith1988using}. Prior to applying various dimensionality reduction methods, we enact standard data preprocessing techniques on the original datasets, including handling of missing data, one-hot encoding for categorical variables, and data normalization. Specifically, we apply z-score normalization to ensure the data has zero mean and unit variance, supporting the requirement for the data to be isotropic as suggested in Section \ref{ideal_condition}.

\textbf{Neural spike train dataset.} A spike train is a sequence of action potentials (spikes) emitted by a neuron over time. The neural spike train dataset used in this research consists of recordings from rat's neurons during drug application from a multi-electrode array \citep{tsai2015high, heuschkel2002three}. The data, comprising 221 records, represents the final dataset after all preprocessing steps suggested by \citep{lazarevich2023spikebench}. Each record contains 15 time-series features extracted using the `tsfresh` package \citep{christ2018time} and 1 class attribute indicating whether the neuron is non-responsive (0) or responsive (1). Exploratory data analysis revealed an imbalance in the dataset, with 190 non-responsive neurons (86\%) and 31 responsive neurons (14\%). 

To provide an unbiased assessment of our method's performance, we conduct extensive experiments on the datasets with 10-fold cross-validation with the exception of the neural spike train dataset. Due to the limited number of positive cases in the neural spike train dataset, we performed 5-fold cross-validation to ensure a reasonable sample size for validation.

\subsection{Results}

\begin{figure*}
  \centering
  \includegraphics[width=14cm]{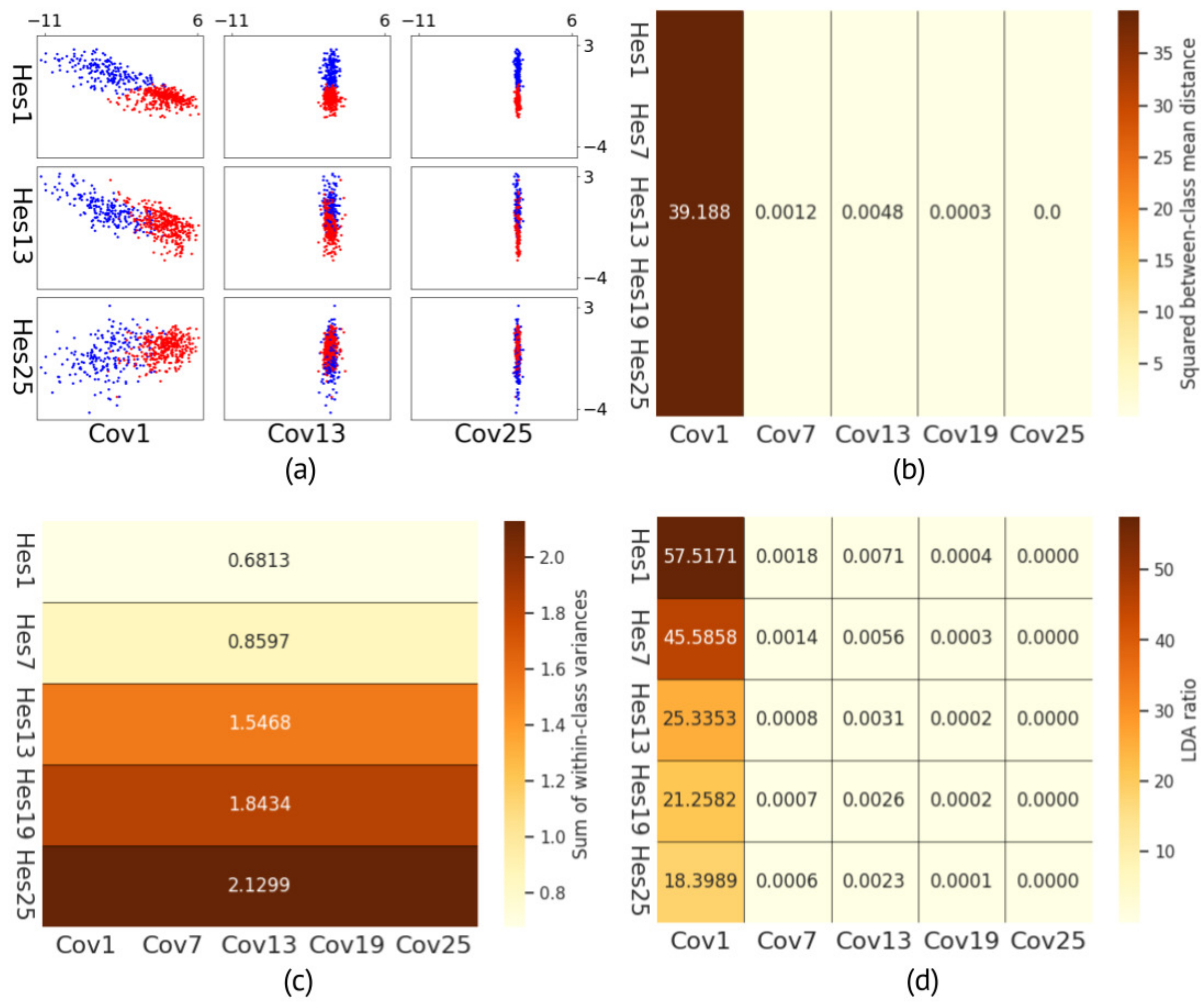}
  \caption{\textbf{Projection of the Wisconsin breast cancer data into different combined spaces of the covariance and Hessian eigenvectors.} \textbf{(a)} Nine selected projection plots, each representing data projected onto a distinct space created by combining the first three covariance and first three Hessian eigenvectors. \textbf{(b)} Heatmap showing the squared between-class mean distance for projections onto varying combinations of covariance and Hessian eigenvectors. The heatmap demonstrates that the values remain constant vertically across different Hessian eigenvectors, while exhibiting a noticeable descending order horizontally, aligning with the descending order of the variance. These results essentially concretize our formal premise, empirically validating the linear relationship described in Eq(\ref{eq:1}) between the variance and the squared between-class mean distance. \textbf{(c)} Heatmap showing the sum of within-class variances for projections onto different combinations of covariance and Hessian eigenvectors. The values remain constant horizontally across different covariance eigenvectors but exhibit a clear ascending order vertically, aligning with the descending order of the Hessian. These empirical results validate the negative correlation between the Hessian and the within-class variance described in Eq(\ref{eq:2}) within the framework of our theoretical foundation. \textbf{(d)} Heatmap displaying the LDA ratio, representing the ratio between the squared between-class mean distances presented in (b) and the corresponding within-class variances shown in (c). The highest LDA ratio is observed for the combination of the first Hessian eigenvector with the first covariance eigenvector. Notably, a general descending pattern is observed both horizontally and vertically across different combinations, indicating that both covariance eigenanalysis (represented along the horizontal direction) and Hessian eigenanalysis (represented along the vertical direction) equally contribute to the class separability (represented by the LDA ratio).}
  \label{fig1}
\end{figure*}

The results depicted in Figure \ref{fig1}, which pertain to the WBCD dataset, along with the corresponding findings presented in \ref{more-results} for the other three datasets, collectively reaffirm the consistency and robustness of our proposed approach across diverse datasets. The heatmaps shown in Figure \ref{fig1}(b) and \ref{more-results} consistently demonstrate the reduction in squared between-class mean distance as more covariance eigenvectors are incorporated, aligning with our established theoretical framework (Eq(\ref{eq:1})). Furthermore, Figure \ref{fig1}(c) and its counterparts in the appendix reveal the ascending order of within-class variances, in sync with the descending order of the Hessian, supporting our theoretical foundation (Eq(\ref{eq:2})). Furthermore, Figure \ref{fig1}(d) and its counterparts illustrate the LDA ratio, emphasizing the equal contributions of covariance and Hessian eigenanalyses to class separability. Importantly, the results also imply that the highest LDA ratio is observed for the combination of the first Hessian eigenvector with the first covariance eigenvector. This observation underscores the significance of projecting data onto the combined space of these primary eigenvectors (as outlined in Eqs (\ref{eq:a}, \ref{eq:b})), forming the core of our proposed method. These consistent empirical results across multiple datasets not only validate our theoretical premises but also endorse the effectiveness of our proposed method in optimizing class separability.

The evaluation results in Figure \ref{fig2} compare the performance of different data projection methods, including PCA, KPCA, Hessian, UMAP, LLE, LOL, LDA, KDA, and the proposed method, using 5- or 10-fold cross-validation. It is essential to note that we introduced nonlinearity to the comparative analysis by utilizing distinct kernels determined through grid search in KPCA and KDA for each dataset. Notably, our proposed method consistently outperforms all others, securing the highest average scores across all datasets and evaluation metrics. This consistent superiority of our approach implies its potential as a valuable tool for improving classification performance in various domains, further highlighting its promise as a robust and effective method for dimensionality reduction and data projection.

\begin{figure*}[htbp]
  \centering
  \includegraphics[width=14cm]{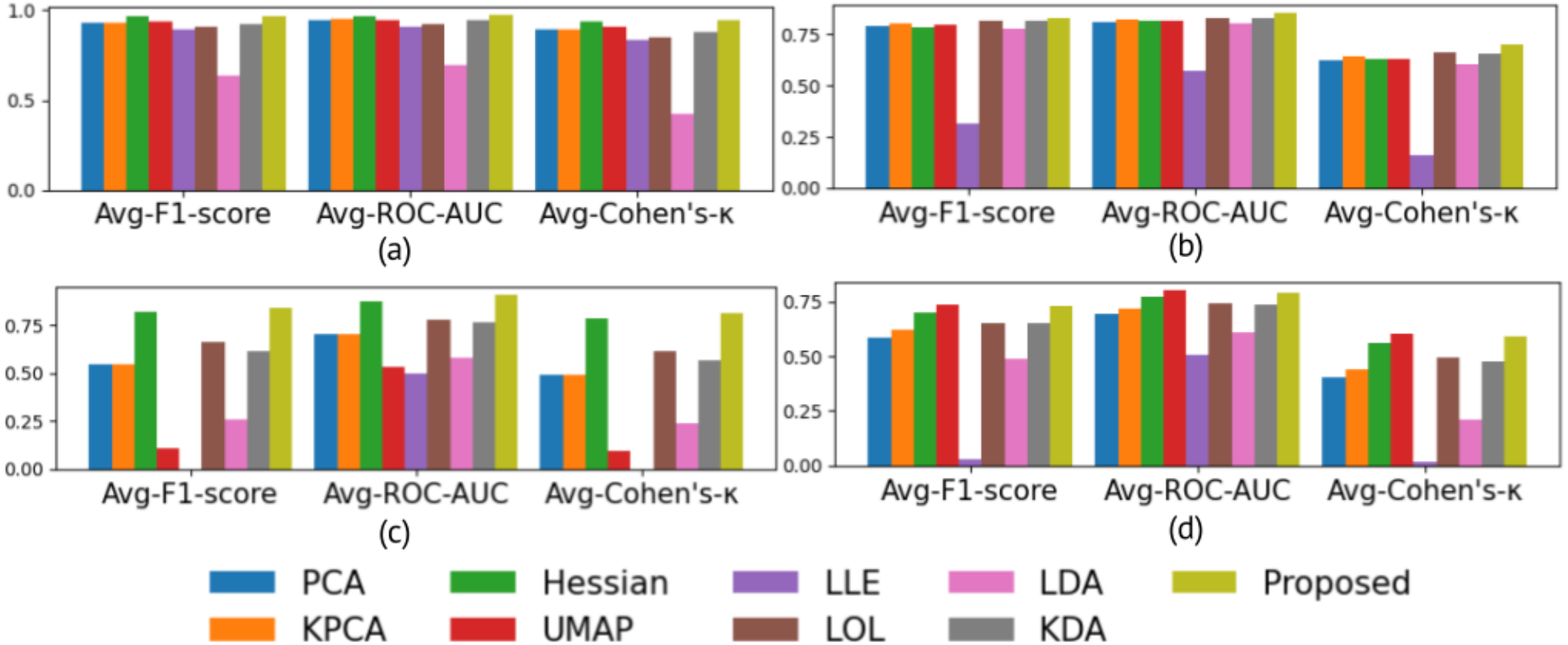}
  \caption{\textbf{Performance comparison of data projection methods using cross-validation on four distinct datasets: (a) WBCD, (b) heart disease, (c) neural spike train, and (d) Pima Indians diabetes datasets.} This figure presents the average F1 score, ROC AUC, and Cohen's Kappa values obtained through 5- or 10-fold cross-validation for nine data projection techniques: PCA, KPCA (cosine similarity for WBCD, polynomial kernel with degree=3 for Heart, linear kernel for Neural, cosine similarity for Pima), Hessian, UMAP, LLE, LOL, LDA, KDA (cosine similarity for WBCD, RBF kernel with coefficient=0.01 for Heart, sigmoid kernel with coefficient=2 for Neural, cosine similarity for Pima), and the proposed method. Notably, the proposed method consistently outperforms all other techniques, achieving the highest scores across all evaluation metrics.}
  \label{fig2}
\end{figure*}

\section{Illustrative case study: WBCD dataset}

In this section, we illustrate the practical application of the proposed method using the WBCD dataset. The WBCD dataset comprises 30 predictor attributes for diagnosing breast cancer. This section presents detailed analyses and visualization of the WBCD dataset using our method and other established data projection techniques.

We first examine the separability of classes in the WBCD dataset using basic pairwise attribute projections. Figure \ref{fig:pairplot} shows a pairplot of the first four predictor attributes, colored by class labels. This pairplot projects the 30-dimensional data into 1D and 2D spaces between individual attributes and attribute pairs. As can be visually observed from the figure, these simple projections do not provide good separability between the two classes. This is because these projections are not optimized for class separability. In contrast, established projection methods such as PCA, LDA, and UMAP, as well as our proposed method, are specifically designed and optimized to enhance class separability.

\begin{figure*}
  \centering
  \includegraphics[width=16cm]{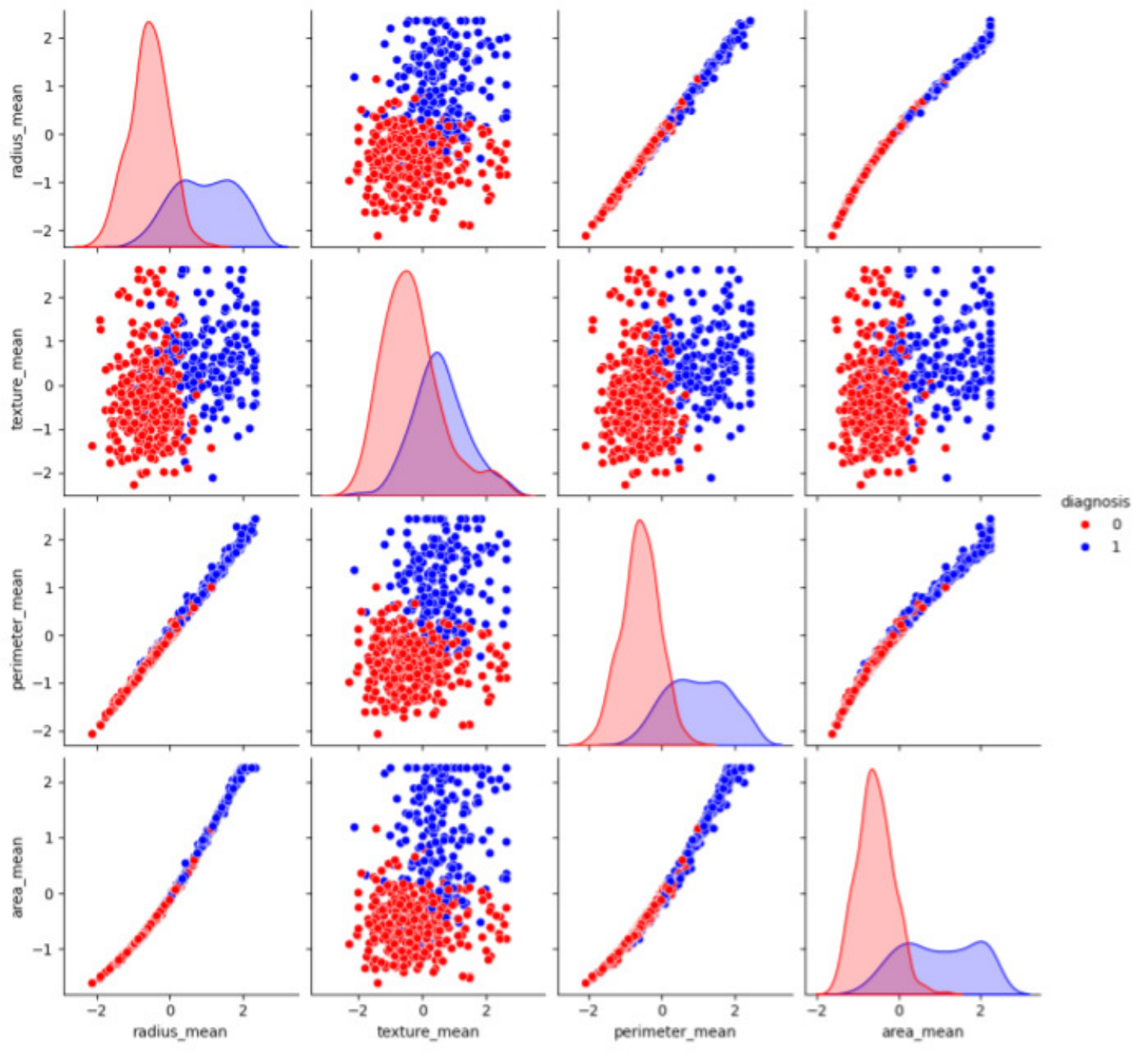}
  \caption{\textbf{Pairplot of the first four predictor attributes in the WBCD dataset, colored by class labels.} This pairplot presents only four of the 30 predictor attributes, projecting the 30-dimensional data into 1D and 2D spaces between individual attributes and attribute pairs. The diagonal plots display the distribution of each attribute individually using kernel density estimation (KDE) curves, where separation is assessed based on the overlap of the KDE curves for each class. The non-diagonal plots show scatter plots of pairs of attributes, where separation between classes is evaluated based on the extent of overlap or spread of points corresponding to different class labels. The visualizations show that these projections do not provide good separability between the two classes.}
  \label{fig:pairplot}
\end{figure*}

\begin{figure*}
  \centering
  \includegraphics[width=11cm]{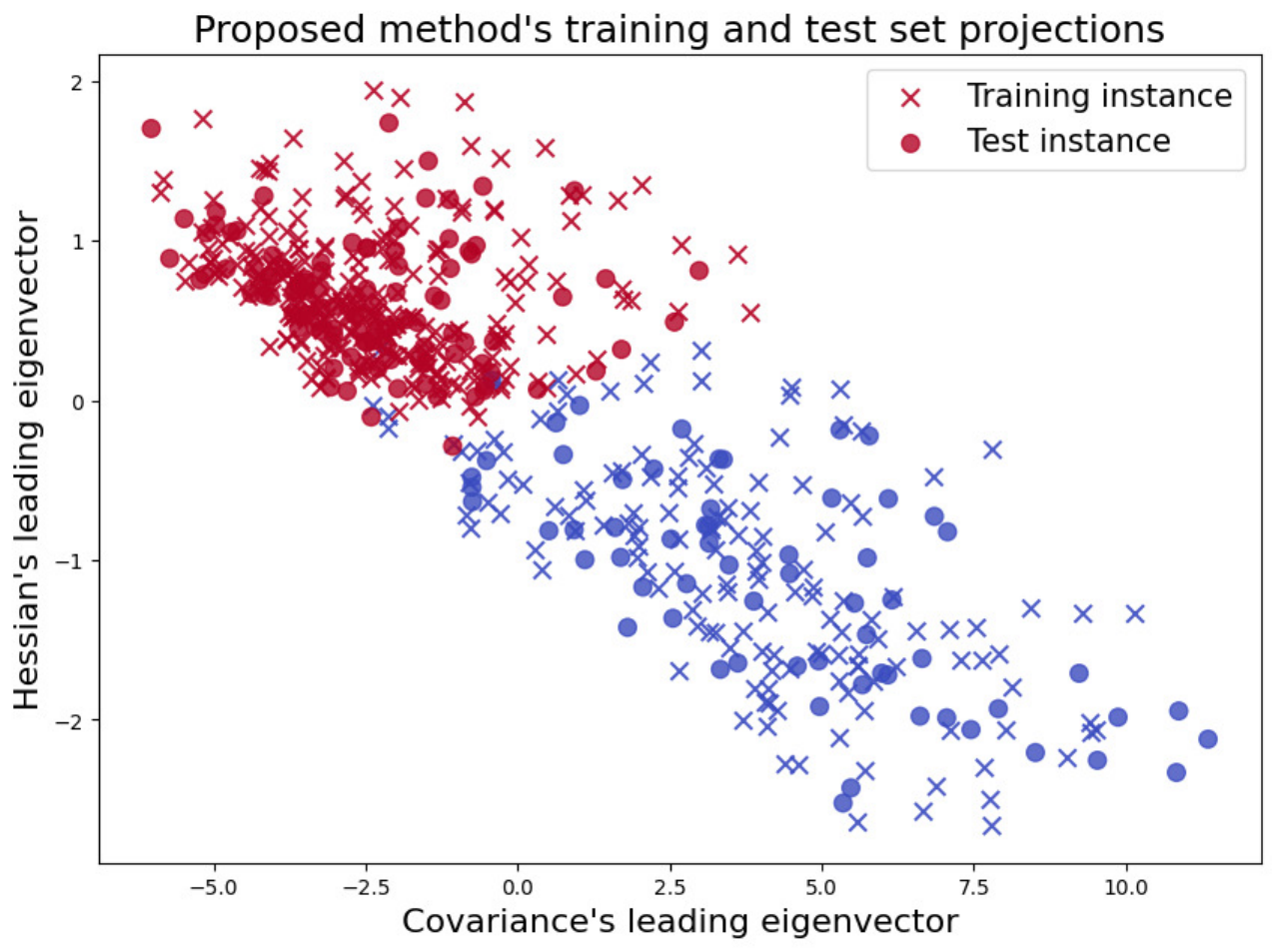}
  \caption{\textbf{Training set and test set projections on the WBCD dataset using the proposed method.} The proposed method projects the high-dimensional data into the leading eigendirections of the covariance and Hessian matrices. The training set projection illustrates how the training data is separated using the proposed method, while the test set projection shows the separation of new unseen data. Visually, the projections indicate good separability between the two classes, demonstrating the effectiveness of the proposed method.}
  \label{fig:projection}
\end{figure*}

Our proposed method leverages the strengths of both the covariance matrix and the Hessian matrix to achieve optimal class separability. The covariance matrix, computed unsupervisedly from the training set using Eq. \ref{eq:cov}, is a square and symmetric matrix that captures the statistical spread and relationships between predictor attributes. The Hessian matrix, evaluated on a deep learning model trained on the same training set using Eq. \ref{eq:hess}, is another square and symmetric matrix that reflects the sensitivity and curvature of the loss function with respect to the model parameters. The DNN test scores are as follows: accuracy: 0.9883, F1 score: 0.9841, AUC ROC: 0.9917, Cohen Kappa score: 0.9749, geometric mean score: 0.9874.

By performing eigenanalysis on the covariance and Hessian matrices, we obtain leading eigenvectors that represent the principal directions of variance and the sharpest curvature, respectively. These eigendirections are crucial for projecting the high-dimensional data into a 2D space that enhances class separability by maximizing the between-class mean distance and minimizing the within-class variances, as described in Theorem \ref{thm:Theorem1} and Theorem \ref{thm:Theorem2}. We project the data into the combined space of the two eigendirections using Eq. \ref{eq:a} and Eq. \ref{eq:b}. Figure \ref{fig:projection} visually demonstrates the effectiveness of the proposed method by showing the 2D projections of the WBCD dataset. The training set projection highlights the observed visual separation achieved on the training data, while the test set projection confirms good visual separability for new unseen data, underscoring the method's potential to generalize well. The visual separability observed here will be quantified later using SVM (Figures \ref{fig:train_comparison} and \ref{fig:test_comparison}).

Our proposed method not only enhances class separability but also facilitates interpretability and explainability of the model. Figure \ref{fig:param_contributions} presents the parameter contributions to the leading eigenvectors of the covariance and Hessian matrices for the WBCD dataset. The contributions are calculated as the absolute values of the elements in the first eigenvector of each matrix. Parameters are sorted by their absolute contributions, and the horizontal bar plots display these sorted contributions for each parameter. The first plot indicates which attributes contribute the most to maximizing the between-class mean distance, providing insights into the key factors that drive class separation. The second plot shows which attributes contribute the most to minimizing the within-class variances, highlighting the factors that help maintain compact and distinct clusters. These visualizations enable a deeper understanding of the model's decision-making process, making it easier to interpret and explain the results.

\begin{figure*}
  \centering
  \includegraphics[width=16cm]{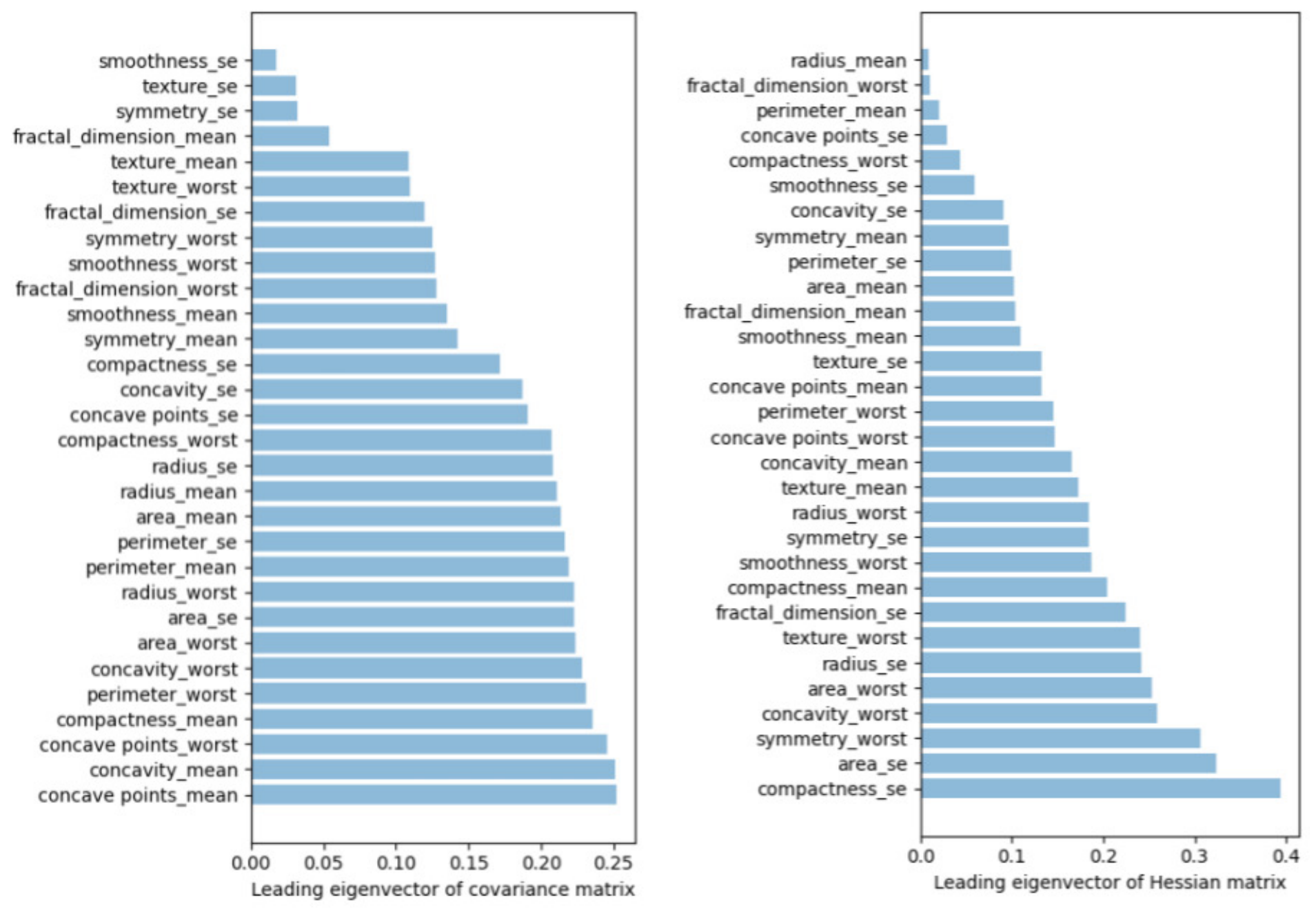}
  \caption{\textbf{Parameter contributions to the leading eigenvectors of the covariance and Hessian matrices for the WBCD dataset.} The contributions are calculated as the absolute values of the elements in the first eigenvector of the covariance and Hessian matrices, respectively. Parameters are sorted by their absolute contributions, and the horizontal bar plots display these sorted contributions for each parameter. The first plot indicates which attributes contribute the most to maximizing the between-class mean distance, while the second plot indicates which attributes contribute the most to minimizing the within-class variances.}
  \label{fig:param_contributions}
\end{figure*}

\begin{figure*}
  \centering
  \includegraphics[width=16cm]{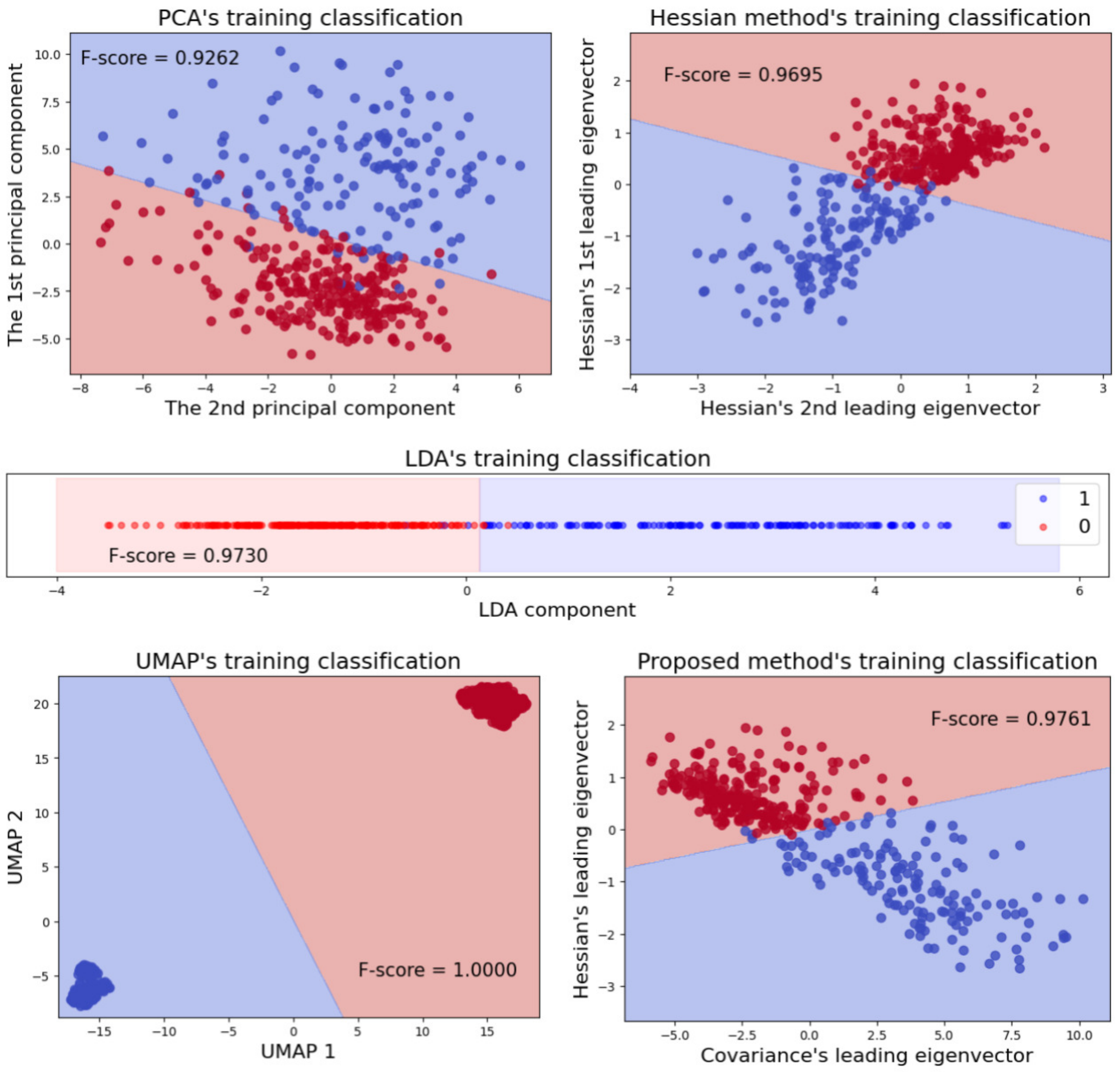}
  \caption{\textbf{Training classification visualizations using various dimensionality reduction methods and linear SVM on the WBCD dataset.} The plots show the separation of training data points using PCA, the Hessian method, LDA, UMAP, and the proposed method. Each subplot includes the separating SVM hyperplane and the corresponding F-score, indicating the classification performance of each method. UMAP achieves the highest performance with perfect separation, while the proposed method shows strong performance, surpassing the other methods.}
  \label{fig:train_comparison}
\end{figure*}

\begin{figure*}
  \centering
  \includegraphics[width=16cm]{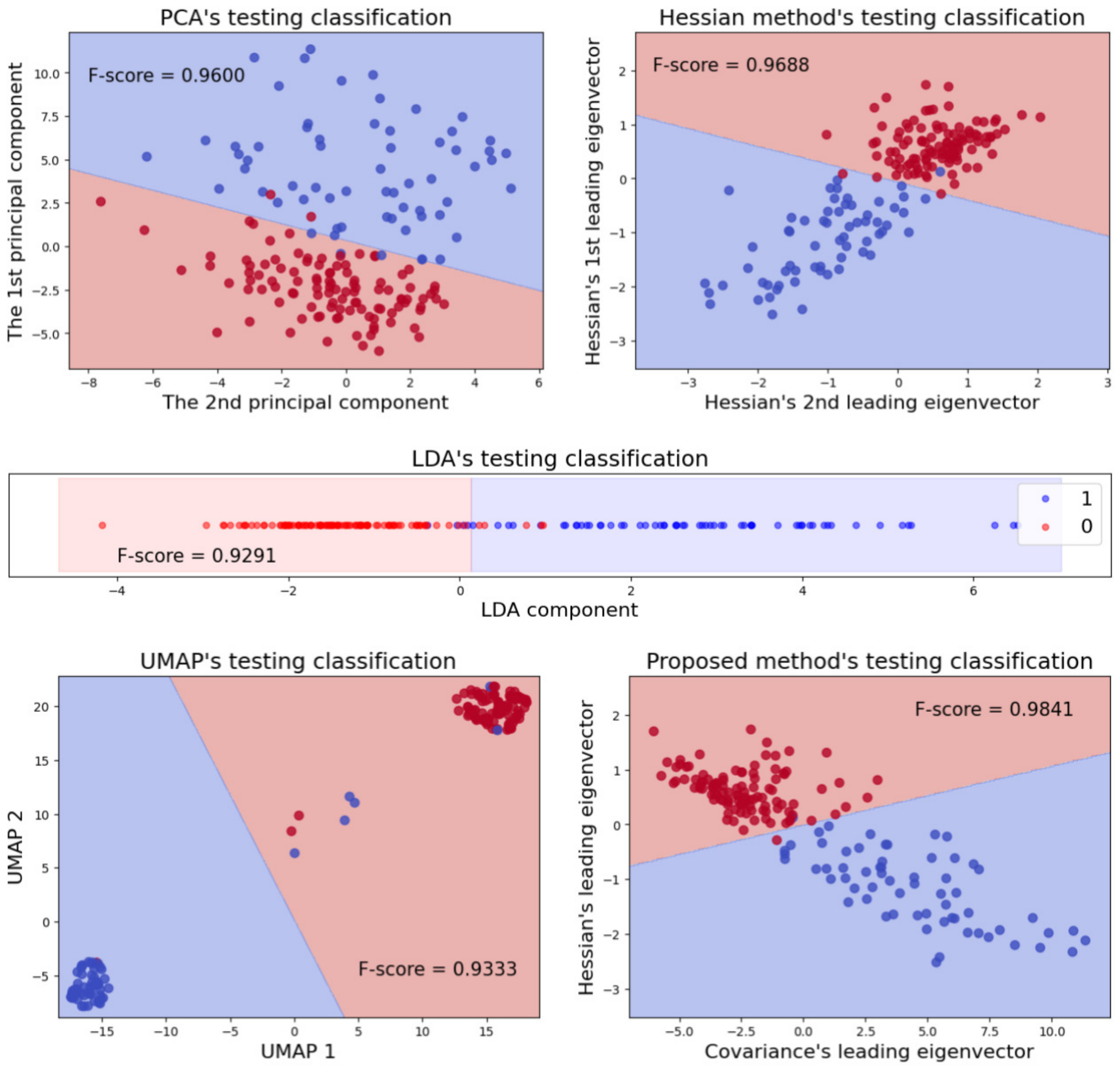}
  \caption{\textbf{Testing classification visualizations using various dimensionality reduction methods and SVM on the WBCD dataset.} The SVM hyperplanes trained on the training set projected using PCA, the Hessian method, LDA, UMAP, and the proposed method, as shown in Figure \ref{fig:train_comparison}, are then applied to new unseen data points. The plots show the separation of these new data points. Each subplot includes the separating SVM hyperplane and the corresponding F-score, indicating the classification performance of each method on the test set.  On new data, the proposed method outperforms all other methods, including UMAP, which had achieved the best performance on the training set. This demonstrates the superior generalization capability of the proposed method.}
  \label{fig:test_comparison}
\end{figure*}

While Figures \ref{fig:projection} illustrate the visual separations achieved by the proposed method, Figures \ref{fig:train_comparison} and \ref{fig:test_comparison} quantify their degrees of separation. These figures present the classification performance using various dimensionality reduction methods, including PCA, the Hessian method, LDA, UMAP, and the proposed method, evaluated with linear SVM on the WBCD dataset. Each subplot shows the separating SVM hyperplane along with the corresponding F-score, providing a quantitative measure of the classification performance. Figure \ref{fig:train_comparison} highlights the separation achieved on the training data, where UMAP achieves the highest performance with perfect separation, while the proposed method demonstrates strong performance, surpassing other methods. Figure \ref{fig:test_comparison} shows the performance on new unseen test data, where the proposed method outperforms all other methods, including UMAP, demonstrating its superior generalization capability.

\section{Discussion}

Our work offers a valuable theoretical perspective and a practical method, emphasizing the utility of simplicity in achieving noteworthy outcomes. We present a theoretical framework that elucidates a significant relationship between covariance and Hessian matrices. This framework, supported by our formal proof, connects covariance eigenanalysis with the first LDA criterion (the concept of \textit{separation}) and Hessian eigenanalysis with the second (the concept of \textit{compactness}). While this theoretical clarity is compelling, the effectiveness of our method, demonstrated across diverse datasets, is contingent upon relatively ideal conditions as outlined in Section \ref{ideal_condition}. These conditions include assumptions about data isotropy, specifically the dataset subsets corresponding to each class being approximately isotropic around their respective means, and the dominance of leading eigenvalues, which are crucial for the method's success in practical applications.

The experiments conducted on the WBCD dataset illustrate these points. As shown in Figure \ref{fig:abs_cov_matrices}, the absolute covariance matrices for each class in the WBCD dataset reveal a pattern that approximates the isotropic condition. The relatively uniform higher values along the diagonal and generally lower values in the off-diagonal elements indicate that the covariance matrices are nearly proportional to the identity matrix. In contrast, as shown in \ref{unnormalized_cov}, the absolute within-class covariance matrices for the unnormalized WBCD dataset exhibit a significant dominance of a few components that overshadow the other components, resulting in a structure that is far from isotropic and a far less suitable foundation for achieving effective class separability. This highlights the importance of normalization as discussed in Subsection \ref{sec:z-score}. The WBCD dataset being normalized (as with the other datasets used in this study) helps approximate isotropy, as does the dataset being a well-structured dataset with low redundancy among attributes. This near-isotropy supports the method's effectiveness in maximizing the between-class mean distance when projecting the multidimensional data into the leading eigenvector of the covariance matrix, as suggested by Theorem \ref{thm:Theorem1}. Notably, the superior performance of the proposed method under these near-isotropic conditions demonstrates its robustness and ability to achieve optimal results even when perfect isotropy is not strictly present.

Moreover, the eigenspectra displayed in Figure \ref{fig:eigenspectra} highlight the dominance of the first eigenvalues in both the covariance and Hessian matrices. The eigenvalues are spread over several orders of magnitude on a logarithmic scale, often with approximately uniform spacing. Notably, the first eigenvalue for both matrices is situated on a different decade from the second and the subsequent eigenvalues indicating a significant separation in magnitude. This characteristic, especially pronounced in the Hessian matrix, is indicative of a "sloppy" model \citep{waterfall2006sloppy, transtrum2010nonlinear, transtrum2011geometry, transtrum2015perspective, machta2013parameter, raman2017delineating, hartoyo2019parameter}, where a few stiff directions dominate. Similarly, in the covariance matrix, this pattern suggests a dominant underlying structure or trend in the data. This distribution ensures that the leading eigenvector captures a significant portion of the variance or curvature, crucial for the proposed method's effectiveness in distinguishing between classes. 

The dominance of the leading eigenvectors in capturing the most significant information is further reflected in the comparative performance results shown in Figure \ref{fig:dnn_performances}. The results show that while Hesssian method and the proposed method sometimes outperform the standard DNNs, in other cases, DNNs perform better. This variability arises because DNNs utilize the entire feature space, while the projection methods operate within a reduced eigenspace using only a subset of the most informative eigendirections. When the omitted directions contain meaningful information, the standard DNNs excel; conversely, when the excluded features are primarily noise, the projection methods yield superior results. Importantly, in all scenarios, the proposed method consistently achieves strong performance, highlighting its ability to leverage the dominant eigenvectors and capture significant information from both the covariance and Hessian matrices.

\begin{figure*}
  \centering
  \includegraphics[width=16cm]{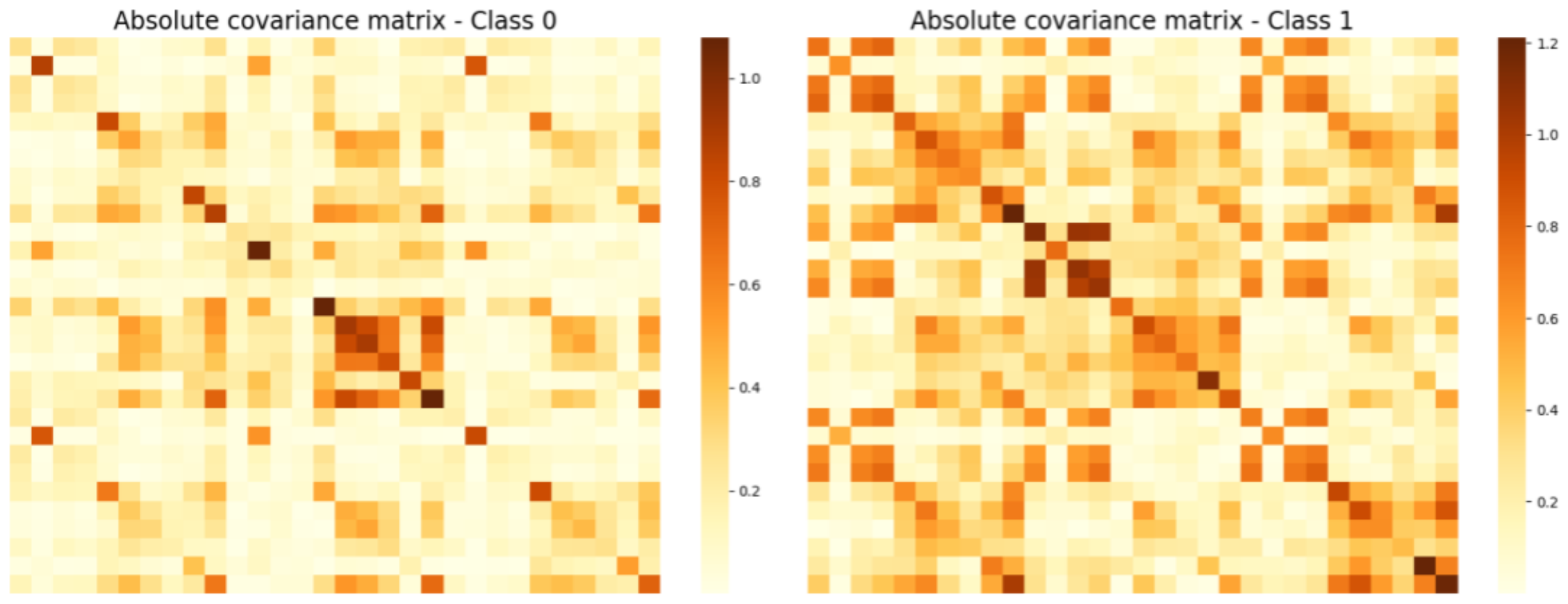}
  \caption{\textbf{Absolute covariance matrices for (normalized) WBCD dataset by class.} The figure presents the absolute values of the covariance matrices for the WBCD dataset, calculated separately for each class. These matrices highlight the strength of linear relationships between features within each class. In these absolute covariance matrices, the approximate isotropic condition is evidenced by the relatively uniform higher values along the diagonal and the generally lower values in the off-diagonal elements. Specifically, for Class 0, the average diagonal value is 0.5655, and the average non-diagonal value is 0.1566; for Class 1, these values are 0.8503 and 0.2828, respectively. These characteristics, though not starkly isotropic, approximate the conditions where the covariance matrices are proportional to the identity matrix. This approximation helps ensure the applicability of Theorem \ref{thm:Theorem1} in higher dimensions, as evidenced by the superior performance of the proposed method in practice.}
  \label{fig:abs_cov_matrices}
\end{figure*}

\begin{figure*}
  \centering
  \includegraphics[width=10cm]{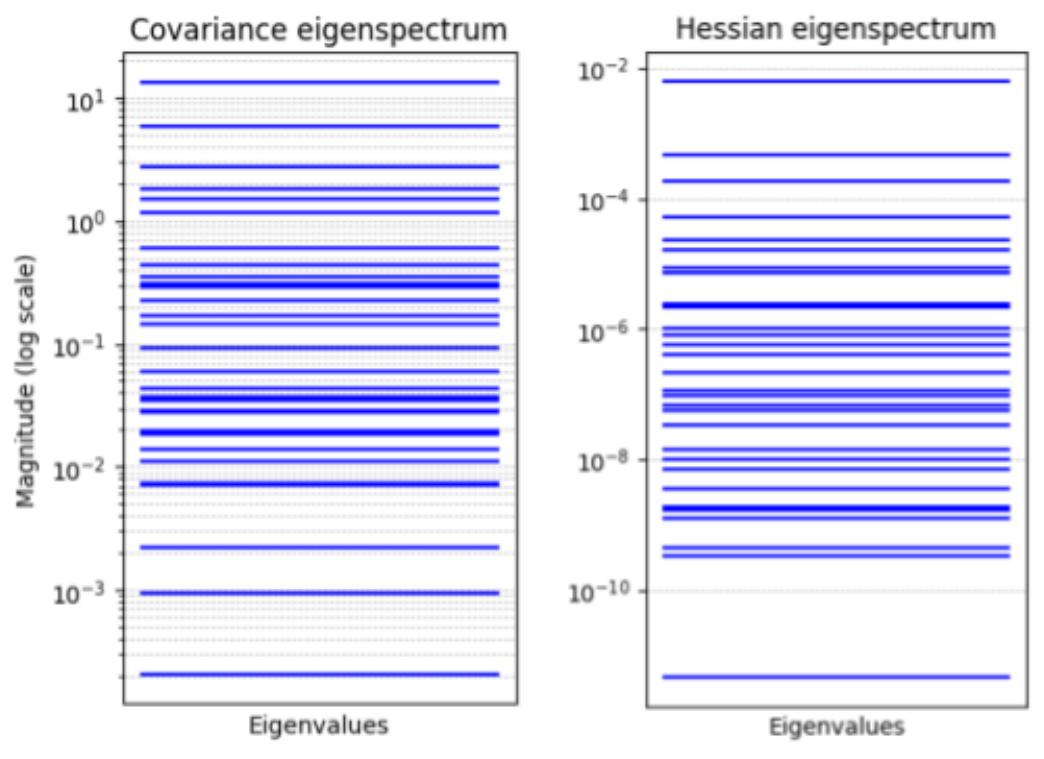}
  \caption{\textbf{Eigenspectra of Covariance and Hessian Matrices for the WBCD Dataset.} The plots display the eigenspectra of the covariance (left) and Hessian (right) matrices on a logarithmic scale. In both spectra, it is visible that the eigenvalues are spread over several orders of magnitude, often with approximately uniform spacing between them. Remarkably, the first eigenvalue in both matrices is positioned an order of magnitude above the second and subsequent eigenvalues, indicating a substantial difference in scale. This distribution suggests that the leading eigenvector captures a substantial portion of the variance or curvature, which is crucial for the effectiveness of the proposed method.}
  \label{fig:eigenspectra}
\end{figure*}

\begin{figure*}
  \centering
  \includegraphics[width=14cm]{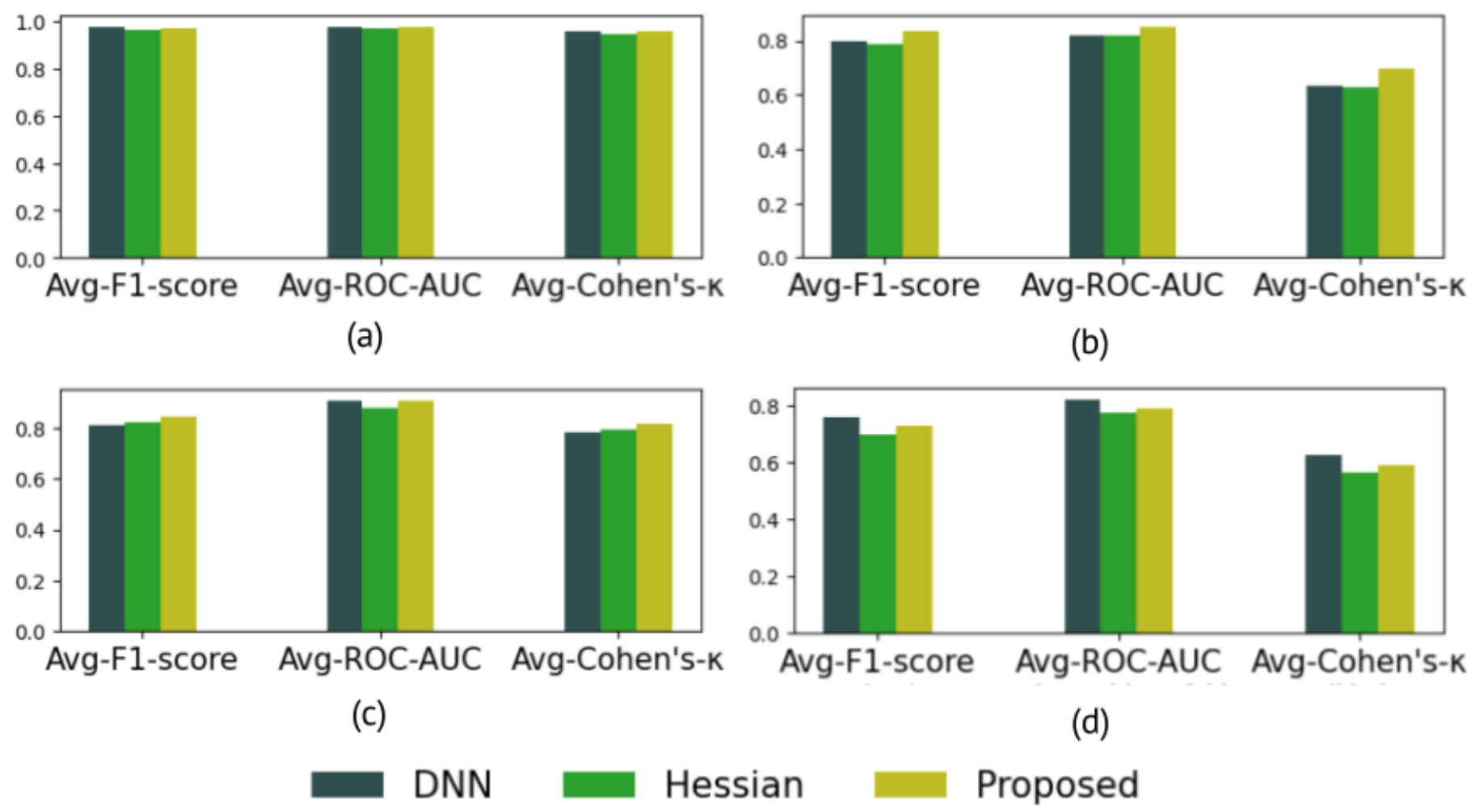}
  \caption{\textbf{Performance comparison of DNN and the two DNN-based data projection methods using cross-validation on four distinct datasets: (a) WBCD, (b) heart disease, (c) neural spike train, and (d) Pima Indians diabetes datasets.} This figure presents the average F1 score, ROC AUC, and Cohen's Kappa values obtained through 5- or 10-fold cross-validation for DNNs and the two DNN-based projection techniques: Hessian method and the proposed method. Both methods are referred to as DNN-based methods because they rely on the eigendirections derived from the Hessian matrix evaluated on a trained deep neural network. In subfigures (b) and (c), the DNN-based projection methods outperform the standard DNNs, while in subfigures (a) and (d), the standard DNNs show superior performance. This difference arises because DNNs utilize the entire feature space, while the projection methods operate within a reduced eigenspace, leveraging only one or two key eigendirections. When the excluded eigendirections in the full space contain additional relevant information, the DNNs outperform the projection methods; conversely, when the excluded features are primarily noise, the projection methods yield better performance by focusing on the most informative directions. }
  \label{fig:dnn_performances}
\end{figure*}

In contrast, datasets that do not fit the ideal conditions of isotropy and dominant leading eigenvalues may present challenges for the proposed method. For instance, if the within-class covariance matrix elements are random or lack a pattern approximating proportionality to the identity matrix, which can occur in unnormalized datasets or datasets with high redundancy among attributes, the method's effectiveness may be compromised. Additionally, if the covariance and/or Hessian matrices exhibit relatively uniform eigenvalues, where the leading eigenvalues are not sufficiently dominant, the method may not perform as effectively. 

In our experiments, the proposed method outperforms PCA and the Hessian method by comprehensively addressing both LDA criteria—maximizing between-class mean distance and minimizing within-class variances. Unlike PCA, which predominantly focuses on the former and lacks the guidance of class labels, our supervised approach considers both aspects. Despite the computational efficiency associated with unsupervised dimension reduction methods \citep{shen2014generalized}, our approach demonstrates the added value of incorporating class labels. This key insight also underlies our outperformance of KPCA, which, despite operating on non-linearities, remains essentially unsupervised in nature. While the Hessian method concentrates on minimizing within-class variances, our method optimally combines the strengths of PCA and the Hessian, effectively identifying feature space directions that enhance both between-class separation and within-class compactness.

While the proposed method employs the LDA criteria, it surpasses LDA itself in all cases. Figures \ref{fig:train_comparison} and \ref{fig:test_comparison} visually demonstrates the advantages of the proposed method over LDA. LDA is limited to a one-dimensional projection for binary classification problems \cite{ye2004two}, where it seeks to identify a single direction that simultaneously satisfies the two criteria. Conversely, the proposed method splits the task of meeting the criteria into two directions. The utilization of higher dimensionality in the proposed method increases the likelihood of discovering class separability, aligning with Cover's theorem \citep{cover1965geometrical}. KDA, operating on a non-linear mode, outperforms LDA in all cases, yet it remains fundamentally confined to one dimension, restricting its effectiveness in capturing intricate class-specific patterns compared to our proposed method.

Our method outperforms LOL by leveraging non-linear modes of operation, providing a distinct advantage in capturing complex patterns beyond the linear capabilities of LOL. UMAP exhibits good class separability with the widest margin as shown in Figures \ref{fig:train_comparison} and \ref{fig:test_comparison}. However, it is important to note that UMAP is not inherently a classification technique, and thus, it fails to generalize well to new data in Figure \ref{fig:test_comparison} . Similar limitations apply to LLE, which, although effective in revealing local data structures, lacks the inherent capability for classification and generalization.

Figures \ref{fig:train_comparison} and \ref{fig:test_comparison}  also underscores the simplicity and interpretability of linear SVMs as basic linear classifiers in dealing with low-dimensional data. The figure shows clear visualizations of the SVM's decision boundaries and separation achieved through the various dimensionality reduction methods. Notably, our proposed method combined with linear SVMs offers valuable insights into the decision-making process within the underlying DNN. It projects data into a space defined by the leading Hessian direction, which captures the key curvature directions in the DNN's loss landscape. This allows us to visualize which features have the greatest impact on the DNN’s decision-making process (Figure \ref{fig:param_contributions}, right subfigure).  SVM decision boundaries then illustrate class separation in this reduced space (Figures \ref{fig:train_comparison} and \ref{fig:test_comparison}, bottom right subfigures). This combined approach bridges the gap between complex decision boundaries of DNNs and comprehensible, highly accurate decision processes within a 2D space.

In another perspective, our proposed method highlights the effective use of DNNs for data projection and representation. In this approach, DNNs serve as a classifier whose decision-making process can be encapsulated in the Hessian matrix. By extracting the leading eigenvector from this matrix, we capture significant classification information, which is then used as one of the key projection directions in our proposed method. This process leverages the deep learning model's ability to distill complex patterns into a compact and informative representation. This aligns with the findings of previous work, such as that by Hinton and Salakhutdinov \cite{hinton2006reducing}, which highlights the potential of neural networks in enhancing data representation and separability.

The proposed method shows promising performance, but it has certain limitations. The applicability of our approach is based on the premise that a DNN model is already in place. Our method operates on top of the underlying DNN, so any shortcomings or biases in the DNN's performance will naturally reflect in the results obtained from our approach. The limitation of our method lies in its dependence on the quality and accuracy of the underlying DNN. 

Another limitation of the proposed method arises from its reliance on relatively ideal dataset conditions for optimal performance. In practical applications, not all datasets will meet these criteria. For instance, datasets with high redundancy among attributes, lacking proper normalization, or involving models that are not sufficiently sloppy may not exhibit the desired isotropy or dominance of leading eigenvalues. In such cases, the method's performance may degrade, potentially limiting its applicability to a broader range of datasets.

An important trajectory for future work involves investigating the extension of our methodology to accommodate various loss functions beyond binary cross-entropy. The mathematical derivation in our current work relies on the elegant relationship between binary cross-entropy loss and within-class variances. Even though the binary cross-entropy loss function is widely recognized as the standard loss function for binary classification models \citep{wang2024detection}, exploring the adaptability of our method to different loss functions will contribute to a more comprehensive understanding of the method's versatility, but requires careful scrutiny to establish analogous connections. 

Simultaneously, we recognize the need to extend our methodology from binary to multiclass classification. The binary classification focus in this work stems from foundational aspects guiding our formal proof, which is designed around binary assumptions to facilitate a streamlined and elegant derivation process. In particular, the use of binary cross-entropy as the loss function and the utilization of a linear SVM for evaluation inherently adhere to binary classification. Moving forward, careful exploration is needed to adapt our approach to multiclass scenarios to ensure its applicability and effectiveness across a broader range of classification tasks.

It is also important to validate our method on more complex and modern datasets. In this regard, another promising direction for future work is the application of our method to classification problems involving image datasets \citep{alaboodi2024lightweight, decoodt2024transfer, vaghefi2024exploration, madhusudhan2024detection, salowe2024utilizing, rainio2024comparison, naik2024herbid, kadam2024smart, yang2024dbformer}, audio datasets \citep{downward2023aeropsd, bannour2023optimizing, narayanan2024bioacoustic, cai2024unveiling, navine2024all, huddart2024solicited}, and their integration with additional features such as patient information or demographic data \citep{yadav2023comprehensive, kodipalli2024evaluation}. Leveraging datasets that involve more intricate patterns and higher-dimensional data will be instrumental in demonstrating the full potential and scalability of our methodology.

\section{Conclusion}

In this paper, we have made several contributions to the fields of binary classification and dimensionality reduction:

\begin{itemize}
    \item \textbf{Theoretical insight}: We provide a formal framework linking the eigenanalysis of covariance and Hessian matrices to LDA criteria, which enhances the understanding of class separability.
    \item \textbf{Novel method}: We propose a method that integrates covariance and Hessian matrices for dimensionality reduction, aimed at improving class separability in binary classification tasks.
    \item \textbf{Data conditions}: We define certain ideal dataset conditions under which our method performs optimally, ensuring that the theoretical assumptions are met and maximizing the method's effectiveness in practical applications. 
    \item \textbf{Empirical validation}: Our method, tested across various datasets, generally outperforms existing techniques, highlighting its potential robustness and effectiveness under the right conditions.
    \item \textbf{Practical utility}: The method retains a computational complexity comparable to traditional approaches, suggesting its practicality in real-world applications.
     \item \textbf{Improved explainability}: By integrating our method with linear SVMs, we enhance the explainability and interpretability of deep neural networks, addressing their typical opacity and facilitating better understanding of their decision-making processes.
\end{itemize}

\section{Reproducibility statement}

The detailed proofs for the theoretical foundations, emphasizing the maximization of between-class mean distance and minimization of within-class variance, are provided in \ref{full-proof}. The complete source code and datasets for our experiments are accessible through the following links:

\begin{enumerate}

\item WBCD dataset experiment: \href{https://colab.research.google.com/drive/19Wny8Mvb40mK8KEt33uHjM9HQt-IZYod?usp=sharing}{Link to Colab notebook}

\item Heart disease dataset experiment: \href{https://colab.research.google.com/drive/1TCo5L7W10OsWNL8oLpjQTBNv4hkft_62?usp=sharing}{Link to Colab notebook} 

\item Neural spike train dataset experiment: \href{https://colab.research.google.com/drive/1QFR0KbzteLo-XXAt12xYB3kL3FSv6u6_?usp=sharing}{Link to Colab notebook}

\item Pima Indians diabetes dataset experiment: \href{https://colab.research.google.com/drive/1opbwsNihkZRIcaM1AVukR5IqdG41ijmC?usp=sharing}{Link to Colab notebook}

\item Illustrative case study on WBCD dataset: \href{https://colab.research.google.com/drive/1-8MKDuXhqT0HcYalNPo1PP4AG3PH2AlB?usp=sharing}{Link to Colab notebook}

\end{enumerate}

These notebooks contain the complete source code along with the datasets accessed from the same Google Drive account, facilitating easy reproduction and comprehension of our results.

\section{Acknowledgments}

This research is supported by the European Union’s Horizon 2020 research and innovation programme under grant agreement Sano No 857533 and the project of the Minister of Science and Higher Education "Support for the activity of Centers of Excellence established in Poland under Horizon 2020" on the basis of the contract number MEiN/2023/DIR/3796.

%% If you have bibdatabase file and want bibtex to generate the
%% bibitems, please use
%%
\bibliographystyle{elsarticle-num} 
\bibliography{references}

%% else use the following coding to input the bibitems directly in the
%% TeX file.

%%\begin{thebibliography}{00}

%% \bibitem[Author(year)]{label}
%% For example:

%% \bibitem[Aladro et al.(2015)]{Aladro15} Aladro, R., Martín, S., Riquelme, D., et al. 2015, \aas, 579, A101

%%\end{thebibliography}

%% The Appendices part is started with the command \appendix;
%% appendix sections are then done as normal sections
\appendix

\newpage
\onecolumn
\section{Full proofs: maximizing the squared between-class mean distance and minimizing the within-class variance}
\label{full-proof}

\subsection{Proof of Theorem \ref{thm:Theorem1} - Maximizing covariance for maximizing squared between-class mean distance}

To prove that maximizing the variance will maximize the squared between-class mean distance, we start by considering two sets of 1D data points representing two classes, denoted as \( C_1 \) and \( C_2 \), each consisting of \( n \) samples. The data in \( C_1 \) and \( C_2 \) are centered around their respective means, \( \mu_1 \) and \( \mu_2 \). Let \( \mu \) denote the overall mean of the combined data, expressed as:
\[
\mu = \frac{1}{2n} \left( \sum_{i=1}^{n} x_i + \sum_{i=n+1}^{2n} x_i \right).
\]

The between-class mean distance, denoted as $d$, represents the separation between the means of $C_1$ and $C_2$. We can express the means as $\mu_1 = \mu - \frac{d}{2}$ and $\mu_2 = \mu + \frac{d}{2}$, where $\mu$ is effectively located in the middle, equidistant from both $\mu_1$ and $\mu_2$.

Furthermore, the variances of $C_1$ and $C_2$, are equal, denoted as $\sigma_1^2 = \sigma_2^2 = \sigma_w^2$, and the combined data from $C_1$ and $C_2$ has a variance of $\sigma^2$. The variance of the combined data is given by:
\[
\sigma^2 = \frac{1}{2n-1}\left(\sum_{i=1}^{n}(x_i-\mu)^2+\sum_{i=n+1}^{2n}(x_i-\mu)^2\right)
\]

To simplify the expression, we substitute $\mu = \mu_1 + \frac{1}{2}d$ and $\mu = \mu_2 - \frac{1}{2}d$. This yields:

\begin{align*}
s^2 &= \frac{1}{2n-1}\left(\sum_{i=1}^{n}\left(x_i-\left(\mu_1 + \frac{1}{2}d\right)\right)^2+\sum_{i=n+1}^{2n}\left(x_i-\left(\mu_2 - \frac{1}{2}d\right) \right)^2\right) \\
&= \frac{1}{2n-1}\left(\sum_{i=1}^{n}\left(\left(x_i-\mu_1 \right) - \frac{1}{2}d\right)^2+\sum_{i=n+1}^{2n}\left(\left(x_i-\mu_2 \right) + \frac{1}{2}d \right)^2\right) \\
\begin{split}
&= \frac{1}{2n-1}\left(\sum_{i=1}^{n}\left(\left(x_i-\mu_1 \right)^2 - \left(x_i-\mu_1 \right)d + \frac{1}{4}d^2\right)\right) \\
&+ \left. \sum_{i=n+1}^{2n}\left(\left(x_i-\mu_2 \right)^2 + \left(x_i-\mu_2 \right)d + \frac{1}{4}d^2 \right)\right)  \\
&= \frac{1}{2n-1}\left(\sum_{i=1}^{n}\left(\left(x_i-\mu_1 \right)^2  + \frac{1}{4}d^2\right) - \sum_{i=1}^{n} \left( \left(x_i-\mu_1 \right)d \right) \right)\\
&+ \left. \sum_{i=n+1}^{2n}\left(\left(x_i-\mu_2 \right)^2  + \frac{1}{4}d^2 \right)\right) + \sum_{i=n+1}^{2n} \left(\left(x_i-\mu_2 \right)d \right) \\
&= \frac{1}{2n-1}\left(\sum_{i=1}^{n}\left(\left(x_i-\mu_1 \right)^2  + \frac{1}{4}d^2\right) + \left. \sum_{i=n+1}^{2n}\left(\left(x_i-\mu_2 \right)^2  + \frac{1}{4}d^2 \right)\right)\right) \\
&+ \sum_{i=n+1}^{2n} \left(\left(x_i-\mu_2 \right)d \right) - \sum_{i=1}^{n} \left( \left(x_i-\mu_1 \right)d \right) \\
&= \frac{1}{2n-1}\left(\sum_{i=1}^{n}\left(\left(x_i-\mu_1 \right)^2  + \frac{1}{4}d^2\right) + \left. \sum_{i=n+1}^{2n}\left(\left(x_i-\mu_2 \right)^2  + \frac{1}{4}d^2 \right)\right) \right) \\
&+ \left( \sum_{i=n+1}^{2n} \left(x_i-\mu_2 \right) - \sum_{i=1}^{n}  \left(x_i-\mu_1 \right) \right)d\\
\end{split}
\end{align*}

Now, let’s consider the properties of the two subsets \( C_1 \) and \( C_2 \). By the zero-sum property, we have:
\[
\sum_{i=1}^{n} (x_i - \mu_1) = 0 \quad \text{and} \quad \sum_{i=n+1}^{2n} (x_i - \mu_2) = 0.
\]
Substitute these properties into the variance equation:

\begin{align*}
\sigma^2 &= \frac{1}{2n-1}\left(\sum_{i=1}^{n}\left(\left(x_i-\mu_1 \right)^2 + \frac{1}{4}d^2\right) + \sum_{i=n+1}^{2n}\left(\left(x_i-\mu_2 \right)^2 + \frac{1}{4}d^2 \right)\right) \\
&= \frac{1}{2n-1}\left(\sum_{i=1}^{n}\left(x_i-\mu_1 \right)^2 + \sum_{i=n+1}^{2n}\left(x_i-\mu_2 \right)^2 + \frac{1}{2}n\cdot d^2 \right) \\
&\approx \frac{1}{2 \left( n-1 \right)}\left(\sum_{i=1}^{n}\left(x_i-\mu_1 \right)^2 + \sum_{i=n+1}^{2n}\left(x_i-\mu_2 \right)^2  \right) + \frac{1}{4}d^2 \\
&= \frac{1}{2}\left(\frac{\sum_{i=1}^{n}\left(x_i-\mu_1 \right)^2}{\left( n-1 \right)} + \frac{\sum_{i=n+1}^{2n}\left(x_i-\mu_2 \right)^2}{\left( n-1 \right)}  \right) + \frac{1}{4}d^2 \\
&= \frac{1}{2}\left(\sigma_1^2 + \sigma_2^2  \right) + \frac{1}{4}d^2.
\end{align*}

Now, considering the data points representing the projected data onto an (Eigen)vector, we can utilize \hyperref[VRPT]{the variance ratio preservation property for projection onto a vector}, which establishes the relationship between $\sigma^2$, $\sigma_1^2$, and $\sigma_2^2$ as follows:
\begin{align*}
\lambda_1 = \frac{\sigma_1^2}{\sigma^2}, \quad
\lambda_2 = \frac{\sigma_2^2}{\sigma^2},
\end{align*}
or equivalently,
\begin{align*}
\sigma_1^2 = \lambda_1 \cdot \sigma^2, \quad
\sigma_2^2 = \lambda_2 \cdot \sigma^2,
\end{align*}
where each of $\lambda_1$ and $\lambda_2$ is a constant between 0 and 1, determined by the distribution of the original data being projected.

Substituting this equation into the previous expression, we have:

\begin{align*}
\sigma^2 &= \frac{1}{2} \left( \lambda_1 + \lambda_2  \right) \sigma^2 + \frac{1}{4}d^2.
\end{align*}

Let \( \lambda = \frac{1}{2} \left( \lambda_1 + \lambda_2  \right) \). Then:

\begin{align*}
\sigma^2 &= \lambda \cdot \sigma^2 + \frac{1}{4}d^2.
\end{align*}

Let's rearrange the equation by moving $2\lambda \cdot \sigma^2$ to the left side:

\begin{align*}
\sigma^2 - \lambda \cdot \sigma^2 &= \frac{1}{4}d^2.
\end{align*}

Combining like terms:

\begin{align*}
(1 - \lambda) \cdot \sigma^2 &= \frac{1}{4}d^2.
\end{align*}

To solve for $\sigma^2$, divide both sides by $(1 - \lambda)$:

\begin{align*}
\sigma^2 &= \frac{\frac{1}{4}d^2}{1 - \lambda} = \frac{r^2}{1 - \lambda}
\end{align*}

where $r = \frac{1}{2}d = \mu - \mu_1 = \mu_2 - \mu$.

We observe that the sign of $\sigma^2$ and $d^2$ will be the same since the denominator $1 - \lambda$ is always positive (as $0 < \lambda < 1$). Therefore, $\sigma^2$ is linearly proportional to $d^2$.

Hence, maximizing the variance ($\sigma^2$) will maximize the squared between-class mean distance ($d^2$) as desired.

\subsection{Proof of Theorem \ref{thm:Theorem2} - Maximizing hessian for minimizing within-class variance}
 
We aim to prove that maximizing the Hessian will minimize the within-class variance. Let $\theta$ denote a parameter of the classifier.

We define the within-class variance as the variance of a posterior distribution $p_\theta(\theta \mid c_i)$, which represents the distribution of the parameter $\theta$ given a class $c_i$. We denote the variance of this posterior distribution as $\sigma_{post}^2$.

Recall that our Hessian is given by:

\begin{align*}
\mathrm{H}_\theta &= - \left[\nabla_\theta^2 \log p_\theta(c_i \mid \theta) \right]
\end{align*}

However, according to \cite{barshan2020relatif}, we can approximate the Hessian using Fisher information:

\begin{align*}
\mathrm{H}_\theta &\approx \mathbb{E}_{p_\theta} \left[\left(\nabla_\theta \log p_\theta(c_i \mid \theta) \right)^2 \right]
\end{align*}

Assuming that a known normal distribution underlies the likelihood $p_\theta(c_i \mid \theta)$, i.e.,

\begin{equation*}
p_\theta(c_i \mid \theta) = \frac{1}{\sqrt{2\pi\sigma^2}} \exp\left(-\frac{(\theta-\mu)^2}{2\sigma^2}\right)
\end{equation*}

where $\mu$ and $\sigma$ are known constants, we can compute the Hessian as follows:

\begin{align*}
\mathrm{H}_\theta &= \mathbb{E}_{p_\theta} \left[\left(\nabla_\theta \log \frac{1}{\sqrt{2\pi\sigma^2}} \exp\left(-\frac{(\theta-\mu)^2}{2\sigma^2}\right) \right)^2 \right] \\
&= \mathbb{E}_{p_\theta} \left[\left(\frac{\theta - \mu}{\sigma^2}\right)^2 \right] \\
&= \frac{1}{\sigma^4} \mathbb{E}_{p_\theta} \left[(\theta - \mu)^2 \right] \\
&= \frac{1}{\sigma^4} \sigma^2 \\
&= \frac{1}{\sigma^2}
\end{align*}

Now, let us assume that the evidence $p(c_i)$ is a known constant $\rho$. We also assume that the prior distribution $p(\theta)$ follows a uniform distribution within the plausible range of $\theta$, which is bounded by a known minimum value $\theta_{\min}$ and maximum value $\theta_{\max}$. Formally:

\begin{equation*}
p_\theta(\theta) =
\begin{cases}
\dfrac{1}{\theta_{\max} - \theta_{\min}}, & \text{if}\ \theta_{\min} \leq \theta \leq \theta_{\max} \\
0, & \text{otherwise}
\end{cases}
\end{equation*}

Based on Bayes' formula and the given assumptions, the posterior distribution within the plausible range of $\theta$ is:

\begin{align*}
p_\theta(\theta \mid c_i) &= \frac{p_\theta(c_i \mid \theta) \cdot p_\theta(\theta)}{p(c_i)} \\
&= \frac{1}{\rho (\theta_{\max} - \theta_{\min})} \cdot \frac{1}{\sqrt{2\pi\sigma^2}} \exp\left(-\frac{(\theta-\mu)^2}{2\sigma^2}\right)
\end{align*}

This implies that the posterior distribution $p_\theta(\theta \mid c_i)$ is a normal distribution with mean $\mu$ and variance $\sigma^2$:

\begin{equation*}
p_\theta(\theta \mid c_i) \sim \mathcal{N}(\mu,\,\sigma^{2})
\end{equation*}

Therefore, the within-class variance $\sigma_{\text{post}}^2$ of the posterior distribution is equal to $\sigma^2$. 

Combining the previous result with the Hessian calculation, we can conclude that:

\begin{align*}
\mathrm{H}_\theta &= \frac{1}{\sigma^2} = \frac{1}{\sigma_{\text{post}}^2}
\end{align*}

Hence, maximizing the Hessian ($\mathrm{H}_\theta$) will minimize the within-class variances ($\sigma_{\text{post}}^2$) as desired.

\subsection{Variance ratio preservation property for projection onto a vector}
\label{VRPT}

The property is formalized in Theorem \ref{thm:VRPT}, which states that the ratio of variances between any two subsets of 1D data is preserved when the data is projected onto a 2-dimensional vector.

\begin{theorem}[Variance ratio preservation upon projection onto a vector]
\label{thm:VRPT}
Let \( X = \{x_1, x_2, \ldots, x_n\} \) be a set of 1D data points, with two subsets \( X_1 = \{x_1, \ldots, x_m\} \) and \( X_2 = \{x_{m+1}, \ldots, x_n\} \), having variances \(\sigma_{X_1}^2\) and \(\sigma_{X_2}^2\), respectively. Consider projecting \( X \) onto a unit vector \( \mathbf{v} = (v_1, v_2) \) in a 2-dimensional space, resulting in a set of projected points \( Y = \{y_1, y_2, \ldots, y_n\} \), with corresponding projected subsets \( Y_1 \) and \( Y_2 \).

Then the ratio of variances between the two projected subsets \( Y_1 \) and \( Y_2 \) is equal to the ratio of the variances between their original subsets \( X_1 \) and \( X_2 \):

\[
\frac{\sigma_{Y_2}^2}{\sigma_{Y_1}^2} = \frac{\sigma_{X_2}^2}{\sigma_{X_1}^2}.
\]
\end{theorem}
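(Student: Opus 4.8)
The plan is to reduce the statement to the elementary fact that variance of a set of scalars scales by the square of a multiplicative constant under an affine map, and to argue that projection here \emph{is} such a map with one and the same constant for both subsets. The crucial hypothesis is that the data are one-dimensional: even after being placed in the $2$-dimensional ambient space, all points $x_i$ are collinear, lying along a single line with some direction $\mathbf{u}$. Projecting this line orthogonally onto the unit vector $\mathbf{v}=(v_1,v_2)$ therefore sends each datum to $y_i = c\,x_i + b$, where $c=\mathbf{u}\cdot\mathbf{v}$ is the cosine of the angle between the data line and $\mathbf{v}$, and $b$ is a fixed offset. The decisive structural point is that $c$ and $b$ depend only on $\mathbf{v}$ and $\mathbf{u}$, not on the index $i$ and in particular not on whether $x_i$ lies in $X_1$ or in $X_2$; a single slope $c$ governs the images of both subsets.

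Next I would compute each projected variance directly from the definition. Writing the projected subset mean as $\bar{y}_1 = c\,\bar{x}_1 + b$, the offset $b$ cancels inside each centred squared deviation, so that
\begin{equation*}
\sigma_{Y_1}^2 = \frac{1}{m-1}\sum_{i=1}^{m}(y_i-\bar{y}_1)^2 = \frac{1}{m-1}\sum_{i=1}^{m} c^2\,(x_i-\bar{x}_1)^2 = c^2\,\sigma_{X_1}^2,
\end{equation*}
and by the identical computation over the indices $m+1,\dots,n$ one gets $\sigma_{Y_2}^2 = c^2\,\sigma_{X_2}^2$. Forming the quotient, the common factor $c^2$ cancels,
\begin{equation*}
\frac{\sigma_{Y_2}^2}{\sigma_{Y_1}^2} = \frac{c^2\,\sigma_{X_2}^2}{c^2\,\sigma_{X_1}^2} = \frac{\sigma_{X_2}^2}{\sigma_{X_1}^2},
\end{equation*}
which is the claimed identity. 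Note that the per-subset normalisations $\tfrac{1}{m-1}$ and $\tfrac{1}{n-m-1}$ are already absorbed into $\sigma_{X_1}^2$ and $\sigma_{X_2}^2$, so they need not match across subsets; the cancellation is only of the geometric factor $c^2$.

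I expect the main obstacle to be conceptual rather than computational: precisely formalising what ``projecting $1$D data onto a $2$D vector'' means so that the scaling coefficient $c$ is manifestly the same for both subsets. The heavy lifting is done entirely by the one-dimensionality (collinearity) assumption, and it is worth stressing that the result would \emph{fail} for genuinely two-dimensional subsets with differing covariance structure, where different projection directions generically yield different variance ratios; it is exactly the fact that a single line carries all the data that collapses the projection to one uniform scaling. A secondary, minor point requiring care is the bookkeeping with the two distinct sample means $\bar{y}_1$ and $\bar{y}_2$—one must use each subset's own centred sum of squares and verify the offset $b$ drops out within each, rather than assuming a common mean across $Y_1$ and $Y_2$. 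The only case to exclude is $c=0$, i.e.\ projection onto a direction orthogonal to the data line, where both projected variances vanish and the ratio is undefined.
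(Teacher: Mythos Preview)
Your proposal is correct and follows essentially the same approach as the paper: the paper embeds the 1D data as $(x_i,0)$ on the first axis so that the projection is $y_i = v_1 x_i$, yielding $\sigma_{Y_k}^2 = v_1^2\,\sigma_{X_k}^2$ for $k=1,2$ and hence the ratio is preserved. Your version is a mild generalisation (allowing an arbitrary embedding direction $\mathbf{u}$ and an offset $b$, and flagging the degenerate case $c=0$), but the core argument---projection acts as a single affine map with common slope on both subsets, so the $c^2$ factor cancels in the variance ratio---is identical.
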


\begin{proof}
For a point \((x_i, 0)\) on the first axis, its projection onto \( \mathbf{v} = (v_1, v_2) \) is given by \( y_i = x_i \cdot v_1 \). Consequently, the distance between two projected points \( y_i \) and \( y_j \) is \( |y_i - y_j| = |x_i \cdot v_1 - x_j \cdot v_1| = |v_1| \cdot |x_i - x_j| \). Thus, the distances between pairs of points in the original space and their projections are scaled uniformly by \( |v_1| \).

Next, consider the variance of the projected subset \( Y_1 \), denoted as \( \sigma_{Y_1}^2 \). We can compute it using the formula \( \sigma_{Y_1}^2 = \frac{1}{m-1} \sum_{i=1}^{m} (y_i - \mu_{Y_1})^2 \), where \( \mu_{Y_1} \) is the mean of the subset \( Y_1 \). Substituting \( y_i = x_i \cdot v_1 \), and using the fact that \( (y_i - \mu_{Y_1}) = v_1 \cdot (x_i - \mu_{X_1}) \), we obtain \( \sigma_{Y_1}^2 = v_1^2 \cdot \frac{1}{m-1} \sum_{i=1}^{m} (x_i - \mu_{X_1})^2 = v_1^2 \cdot \sigma_{X_1}^2 \).

Similarly, the variance for the second subset \( Y_2 \) is given by \( \sigma_{Y_2}^2 = v_1^2 \cdot \sigma_{X_2}^2 \).

Taking the ratio of the variances of the projected subsets \( Y_1 \) and \( Y_2 \):

\[
\frac{\sigma_{Y_2}^2}{\sigma_{Y_1}^2} = \frac{v_1^2 \cdot \sigma_{X_2}^2}{v_1^2 \cdot \sigma_{X_1}^2} = \frac{\sigma_{X_2}^2}{\sigma_{X_1}^2}.
\]

This establishes that the ratio of variances is preserved in the projection, as required.
\end{proof}

\newpage

\section{Multidimensional extension of Theorem 1 for maximizing squared between-class mean distance via eigenanalysis of the covariance matrix}
\label{extension}

To extend Theorem \ref{thm:Theorem1} to multidimensional data, we consider a dataset where each sample is represented by a \(D\)-dimensional column vector \(\vec{\theta}=\left[\theta_{1}, \theta_{2}, \cdots, \theta_{D}\right]^{T} \).
The dataset of $2n$ samples is divided into two subsets: \(C_{1} = \{\vec{x}_{i} \in \mathbf{R}^{D} \mid i=1,2,\ldots,n\}\) and \(C_{2} = \{\vec{y}_{j} \in \mathbf{R}^{D} \mid j=1, 2,\ldots, n\}\), each containing \(n\) samples. The data in \(C_1\) and \(C_2\) follow the same underlying distribution, centered around their respective means, \(\vec{\mu}_1\) and \(\vec{\mu}_2\), resulting in the combined data from \(C_1\) and \(C_2\) being centered around an overall mean of \(\vec{\mu}\). The covariances for \(C_1\) and \(C_2\) are denoted as \(S_1\) and \(S_2\), respectively, and the combined data has an overall covariance of \(S\). The individual distributions for the vectors $\vec{X}$ and $\vec{Y}$ are isotropic about their respective means, meaning their covariances are proportional to the unit matrix in \(D\) dimensions, i.e., \( S_1 = \sigma_1^{2} I_{D} \) and \( S_2 = \sigma_2^{2} I_{D} \). The distance between the class means, \( d = \left\|\left(\vec{\mu}_{1}-\vec{\mu}_{2}\right)\right\| \), indicates the separation between \(\vec{\mu}_1\) and \(\vec{\mu}_2\).

Define the sample means:

\begin{equation}
\vec{\mu}=\frac{1}{2n} \sum_{k=1}^{2n} \vec{\theta}_{k}=\frac{n}{2n} \vec{\mu}_{1}+\frac{n}{2n} \vec{\mu}_{2} =\frac{1}{2} \left( \vec{\mu}_{1}+\vec{\mu}_{2} \right)
\label{eq:B1}
\end{equation}
where

\begin{equation}
\vec{\mu}_{1}=\frac{1}{n} \sum_{i=1}^{n} \vec{x}_{i} \text { and } \vec{\mu}_{2}=\frac{1}{n} \sum_{j=1}^{n} \vec{y}_{j}
\label{eq:B2}
\end{equation}
The biased sample covariance matrix of the combined training set is then given by:

\begin{equation}
S=\frac{1}{2n} \sum_{k=1}^{2n}\left(\vec{\theta}_{k}-\vec{\mu}\right)\left(\vec{\theta}_{k}-\vec{\mu}\right)^{T}=\frac{1}{2n} \sum_{k=1}^{2n} \vec{\theta}_{k} \vec{\theta}_{k}^{T}-\vec{\mu} \vec{\mu}^{T}
\label{eq:B3}
\end{equation}
Expanding this yields:

\begin{equation}
S=\frac{1}{2n}\left[\sum_{i=1}^{n} \vec{x}_{i} \vec{x}_{i}^{T}+\sum_{j=1}^{n} \vec{y}_{j} \vec{y}_{j}^{T}\right]-\vec{\mu} \vec{\mu}^{T}
\label{eq:B4}
\end{equation}
Now:

\begin{equation}
\sum_{i=1}^{n} \vec{x}_{i} \vec{x}_{i}^{T}=\sum_{i=1}^{n} \vec{x}_{i} \vec{x}_{i}^{T}-\left(n \right) \vec{\mu}_{1} \vec{\mu}_{1}^{T}+\left(n \right) \vec{\mu}_{1} \vec{\mu}_{1}^{T}= n \left(S_{1}+\vec{\mu}_{1} \vec{\mu}_{1}^{T}\right)
\label{eq:B5}
\end{equation}
and similarly:

\begin{equation}
\sum_{j=1}^{n} \vec{y}_{j} \vec{y}_{j}^{T}=n \left(S_{2}+\vec{\mu}_{2} \vec{\mu}_{2}^{T}\right)
\label{eq:B6}
\end{equation}
where \(S_{1}\) and \(S_{2}\) are the (biased) sample covariances of the class 1 and class 2 subsets respectively. Substituting \eqref{eq:B1}, \eqref{eq:B5}, and \eqref{eq:B6} in \eqref{eq:B4}, after some algebra, leads to:

\begin{equation}
S=\frac{n}{2n} S_{1}+\frac{n}{2n} S_{2}+\frac{n^2}{\left(2n\right)^{2}}\left(\vec{\mu}_{1}-\vec{\mu}_{2}\right)\left(\vec{\mu}_{1}-\vec{\mu}_{2}\right)^{T} =  \frac{1}{2} S_{1}+\frac{1}{2} S_{2} +\frac{1}{4}\left(\vec{\mu}_{1}-\vec{\mu}_{2}\right)\left(\vec{\mu}_{1}-\vec{\mu}_{2}\right)^{T}
\label{eq:B7}
\end{equation}

Theorem \ref{thm:Theorem1} is supposed to hold for the projection of the data onto an arbitrary 1D subspace of the full D-dimensional data space, and to imply that maximizing \(\sigma^{2}\) by choosing the direction of this subspace appropriately will also be the direction which maximizes the difference between the projected sample means. Fortunately, since \( S_1 = \sigma_1^{2} I_{D} \) and \( S_2 = \sigma_2^{2} I_{D} \), we have:

\begin{equation}
S = \frac{1}{2} \sigma_1^{2} I_{D} + \frac{1}{2} \sigma_2^{2} I_{D} +\frac{1}{4}\left(\vec{\mu}_{1}-\vec{\mu}_{2}\right)\left(\vec{\mu}_{1}-\vec{\mu}_{2}\right)^{T}
\label{eq:B10}
\end{equation}
Multiplying both sides with \(\left(\vec{\mu}_{1}-\vec{\mu}_{2}\right)\) yields:

\begin{equation}
S \left(\vec{\mu}_{1}-\vec{\mu}_{2}\right) = \left(\frac{1}{2} \sigma_1^{2} + \frac{1}{2} \sigma_2^{2} +\frac{1}{4} d^{2}\right)\left(\vec{\mu}_{1}-\vec{\mu}_{2}\right)
\label{eq:B11}
\end{equation}
Please note that equation \eqref{eq:B11} forms an eigenequation where $S$ is the matrix of interest, the term $\left(\frac{1}{2} \sigma_1^{2} + \frac{1}{2} \sigma_2^{2} +\frac{1}{4} d^{2}\right)$ represents the eigenvalue, and $\left(\vec{\mu}_{1}-\vec{\mu}_{2}\right)$ is the eigenvector. This implies that the difference between the mean vectors $\left(\vec{\mu}_{1}-\vec{\mu}_{2}\right)$ is indeed an eigenvector of the full set's covariance matrix $S$. This alignment confirms that the projection maximizing the variance in the D-dimensional space will also maximize the between-class mean distance, ensuring the validity of Theorem \ref{thm:Theorem1} in higher dimensions. 

\newpage

\section{Additional dataset results}
\label{more-results}

\begin{figure}[!htb]
  \centering
  \includegraphics[width=14cm]{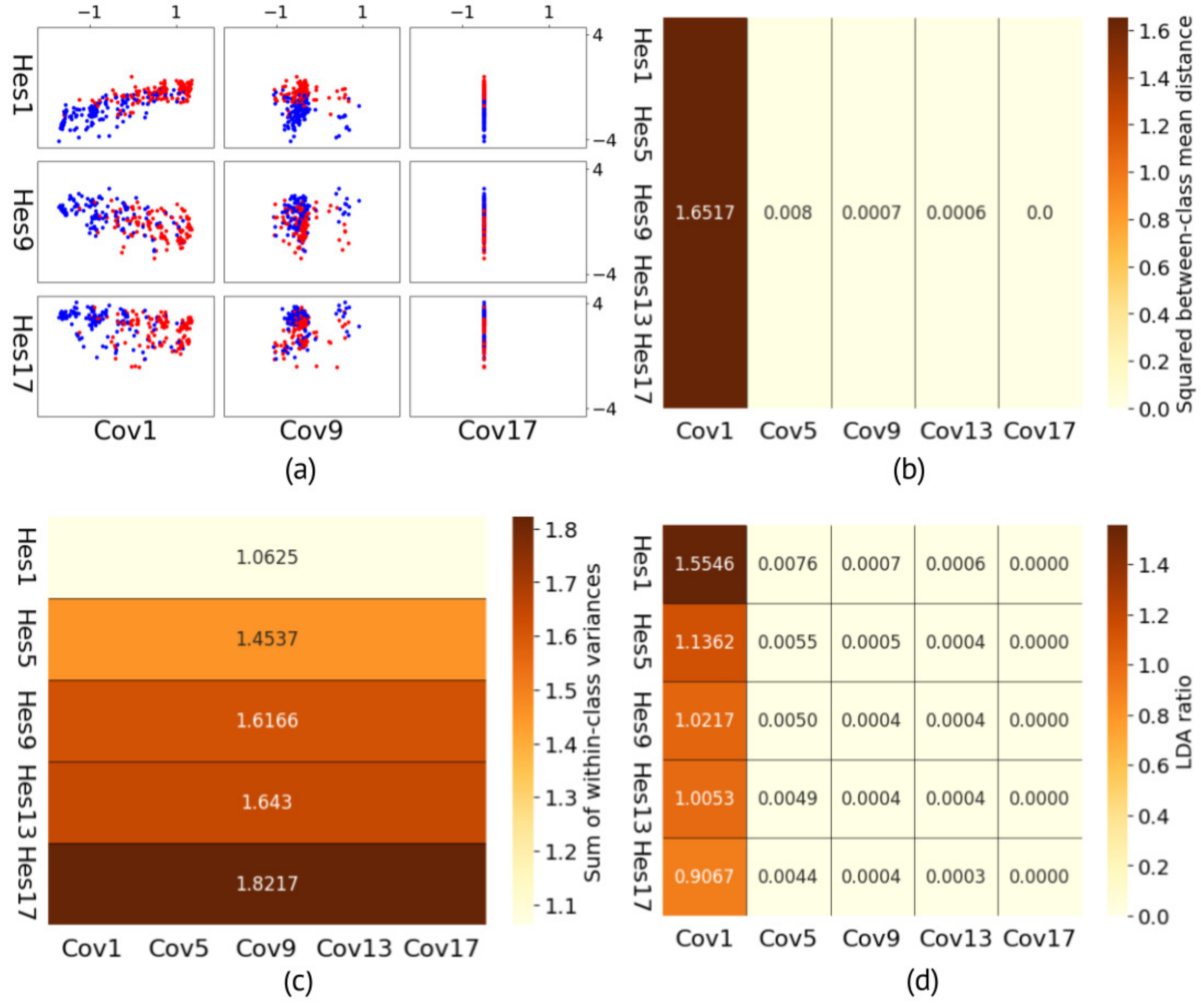}
  \caption{\textbf{Projection of the heart disease data into different combined spaces of the covariance and Hessian eigenvectors.}}
  \label{figx}
\end{figure}

\begin{figure}
  \centering
  \includegraphics[width=14cm]{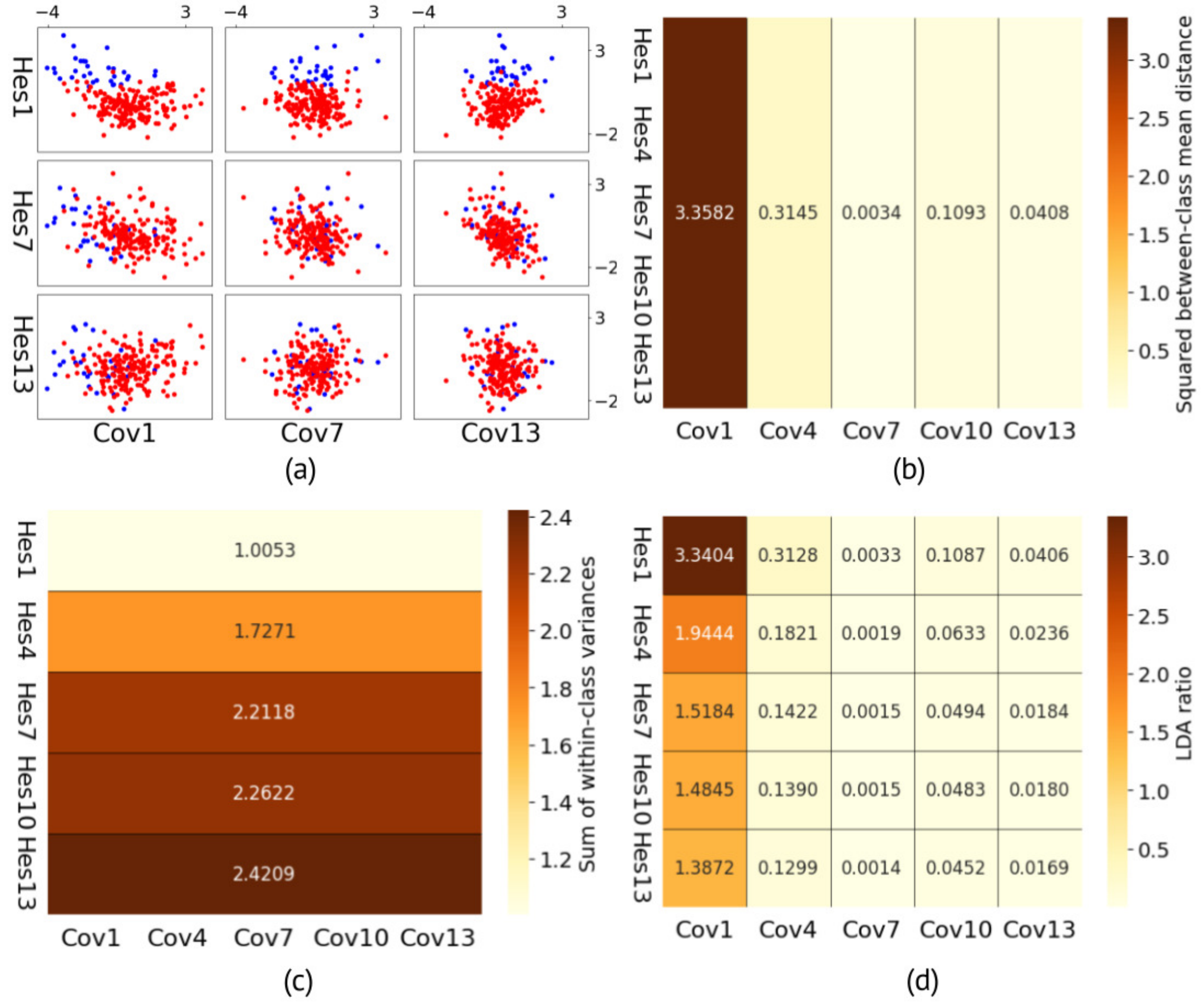}
  \caption{\textbf{Projection of the neural spike train data into different combined spaces of the covariance and Hessian eigenvectors.}}
  \label{figx}
\end{figure}

\begin{figure}
  \centering
  \includegraphics[width=14cm]{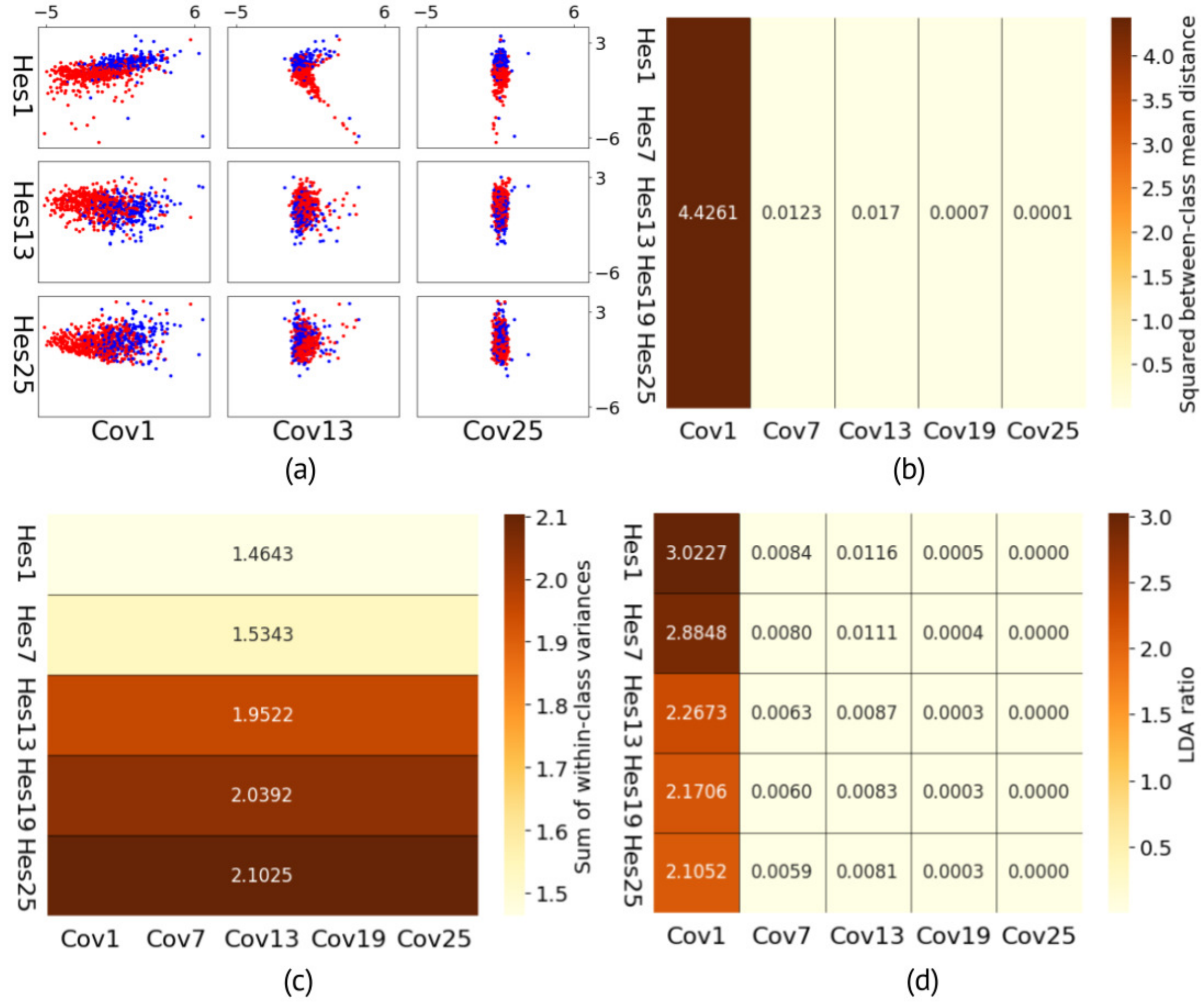}
  \caption{\textbf{Projection of the Pima Indians diabetes data into different combined spaces of the covariance and Hessian eigenvectors.} \textbf{(a)} Nine selected projection plots, each representing data projected onto a distinct space created by combining the first three covariance and first three Hessian eigenvectors. \textbf{(b)} Heatmap showing the squared between-class mean distance for projections onto varying combinations of covariance and Hessian eigenvectors. \textbf{(c)} Heatmap showing the sum of within-class variances for projections onto different combinations of covariance and Hessian eigenvectors. \textbf{(d)} Heatmap displaying the LDA ratio, representing the ratio between the squared between-class mean distances presented in (b) and the corresponding within-class variances shown in (c).}
  \label{figx}
\end{figure}

\newpage

\section{Within-class covariance matrices for unnormalized WBCD dataset}
\label{unnormalized_cov}

\begin{figure*}[h]
  \centering
  \includegraphics[width=16cm]{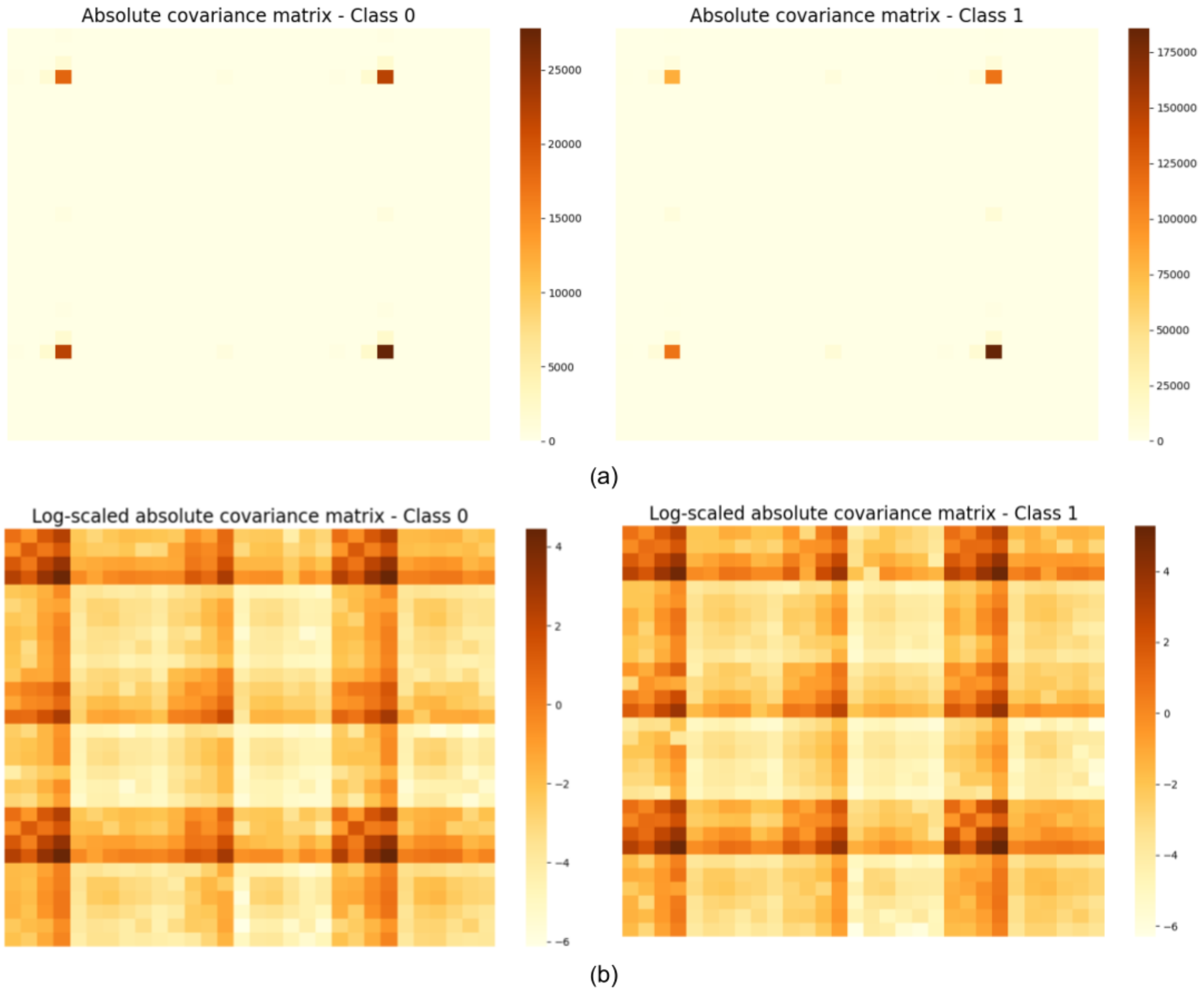}
  \caption{\textbf{Absolute covariance matrices for unnormalized WBCD dataset by class.} Panel (a) presents the absolute within-class covariance matrices for unnormalized WBCD dataset in decimal scale. It reveals that the variances are dominated by a couple of components, each on the order of \(10^5\), along with their significant covariance. These notable variances and covariance overshadow those of the other components, some of which are on the order of \(10^{-5}\). To enhance the visualization of variances and covariance structures, panel (b) utilizes a log scale. However, even with this adjustment, the covariance matrices reveal clusters without displaying a consistent pattern of uniform diagonal elements or isomorphism, clearly demonstrating that the lack of normalization has contributed to this irregular structure.}
  \label{fig:abs_cov_matrices_unnormalized}
\end{figure*}

\end{document}